\newcommand{\cmark}{\ding{51}}%
\newcommand{\xmark}{\ding{55}}%
\newtheorem{theorem}{Theorem}[section]
\newtheorem{definition}[theorem]{Definition}
\newtheorem{lemma}{Lemma}[section]
\newtheorem{assumption}{Assumption}[section]
\begin{document}

%
\title{Federated Learning Architectures: A Performance Evaluation with Crop Yield Prediction Application}

%

\author[myfirstdaddress,mysecondaddress]{Anwesha Mukherjee \corref{mycorrespondingauthor}}
\cortext[mycorrespondingauthor]{Corresponding author}\ead{anweshamukherjee2011@gmail.com}
\author[mysecondaddress]{Rajkumar Buyya} \ead{rbuyya@unimelb.edu.au}
\address[myfirstdaddress]{Department of Computer Science\\ Mahishadal Raj College, Mahishadal, West Bengal, 721628, India}
\address[mysecondaddress]{Cloud Computing and Distributed Systems (CLOUDS) Laboratory, \\School of Computing and Information Systems,\\ The University of Melbourne, Victoria, 3052, Australia}


%

%
\begin{abstract}
Federated learning has become an emerging technology for data analysis for IoT applications. This paper implements centralized and decentralized federated learning frameworks for crop yield prediction based on Long Short-Term Memory Network. For centralized federated learning, multiple clients and one server is considered, where the clients exchange their model updates with the server that works as the aggregator to build the global model. For the decentralized framework, a collaborative network is formed among the devices either using ring topology or using mesh topology. In this network, each device receives model updates from the neighbour devices, and performs aggregation to build the upgraded model. The performance of the centralized and decentralized federated learning frameworks are evaluated in terms of prediction accuracy, precision, recall, F1-Score, and training time. The experimental results present that $\geq$97\% and $>$97.5\% prediction accuracy are achieved using the centralized and decentralized federated learning-based frameworks respectively. The results also show that the using centralized federated learning the response time can be reduced by $\sim$75\% than the cloud-only framework. Finally, the future research directions of the use of federated learning in crop yield prediction are explored in this paper. 
\end{abstract}

%
%
\begin{keyword}
Federated learning \sep Prediction accuracy \sep Training time \sep Response time
\end{keyword}

%

%
\maketitle

\section{Introduction}
\label{intro}
Agriculture is an important sector that has a huge impact on the economy of most of the countries. The conventional farming practices depend on manual decision making on crop harvesting, irrigation, etc., that suffers from improper selection of crops, inefficient utilization of lands, water resources, etc. To overcome these problems, smart agriculture and farming practices come into the scenario, where Internet of Things (IoT) plays a significant role (\cite{debauche2021data, boursianis2020internet}). In IoT-based smart agricultural systems, IoT devices are used for soil and environmental parameters' data collection, and then analysing the data for decision making (\cite{bera2023cropreco, bera2023internet}). For data analysis, machine learning (ML) and deep learning (DL) are used, and the cloud servers are used for storing and analysing the large volume of data. However, data storage and analysis inside the cloud requires huge amount of IoT data transmission to the cloud, that requires seamless network connectivity and high network bandwidth. However, the agricultural lands are mainly located at rural areas, where the network bandwidth may not be high, as well as seamless network connectivity may not available. Hence, data transmission from IoT devices to the cloud is a challenge. Further, the entire data transmission from IoT devices to the cloud cause high network traffic, and the storage and analysis of the huge volume of data increase the cloud overhead, latency, etc. To address these issues, edge computing and fog computing come (\cite{bera2023cropreco, bera2024flag}). In edge computing, the resources are placed at the edge of the network to reduce the latency and computation overhead on the cloud. In fog computing, the intermediate devices, e.g. switch, router, etc., process data to reduce the latency and overhead on the cloud. Nevertheless, the concern regarding data security and privacy still remains. Moreover, the soil data, environmental data, climate and weather conditions of various geographical regions are different, and the user may not like to transmit and store the data over the cloud due to privacy protection. Hence, personalized local models and an efficient global model are required to perform accurate prediction and utilize the agricultural resources efficiently. To utilize the edge devices for local data analysis and for collaborative training to develop a global model, federated learning (FL) (\cite{nguyen2021federated, zhang2021survey, li2020review}) comes into the scenario. 
\par
FL is a learning approach that allows collaborative training with the coordination of multiple devices and a central server without sharing individual dataset (\cite{nguyen2021federated, mothukuri2021survey}). In an edge-cloud-based FL, the edge devices serve as the clients and the cloud server acts as the central server, for collaborative training (\cite{bera2024flag}). In an FL-based framework (\cite{nguyen2021federated}), the central server serves as the aggregator that at first creates an initial global model with learning parameters. Each of the clients downloads the current model from the server, computes own model updates using its local dataset, and offloads the local update to the server. The server receives local updates from all clients, and develops an improved global model. The clients download the global update from the server, compute their local updates again, and offloads the updates to the server. This process is continued until the global training is finished. The principal advantages of FL are enhanced data privacy, low-latency, and learning quality enhancement (\cite{nguyen2021federated}). As no data is shared, privacy is protected (\cite{mothukuri2021survey, djenouri2023federated}). Further, an enhanced version of the global model is created through collaborative training. As local data analysis takes place, the latency is reduced. 
\par
According to networking structure and data partitioning, the FL algorithms are classified into several categories. Based on data partitioning, FL algorithms are classified into three types (\cite{nguyen2021federated}): Vertical FL, Horizontal FL, and federated transfer learning. In vertical FL systems, the clients have datasets with same sample space but different feature space. In horizontal FL systems, the clients have datasets with same feature space but different sample space. In federated transfer learning, the clients have datasets with different feature space as well as different sample space. According to the networking structure, FL algorithms are divided into two categories: Centralized Federated Learning (CFL) and Decentralized Federated Learning (DFL). In a CFL system, all the clients train a model in parallel using their local datasets. Then, the clients send the trained parameters to the server. The server aggregates model parameters after receiving from all clients, and builds the updated global model. The clients get the updated model parameters from the server for the next training round. After the server finishes the global model training, each of the clients has same global model as well as its personalized local model. However, the communication with the server may not be always available. In such a case, DFL can be adopted. In DFL, all the clients form a collaborative network. For each of the communication rounds, the clients use their local datasets for local training. After that, each client performs model aggregation based on model updates received from the neighbour nodes. \par

\subsection{Motivation and Contributions}
Crop yield prediction is an important domain of smart agriculture, where ML is used for data analysis (\cite{van2020crop}). As farmers' information as well as soil and environmental parameters' data analysis take place, privacy is a major issue. In such a case, the use of conventional cloud-only framework for data analysis using ML raises concern regarding data privacy, latency, connectivity interruption due to poor network connectivity inside the agricultural lands, etc. To address these issues, the objective of the paper is to explore the use of FL for crop yield prediction. The major contributions of the paper are:
\begin{itemize}
    \item The use of FL in farming practices is discussed, and then an experimental case study is performed to analyse the performance of CFL and DFL in crop yield prediction. We consider a scenario where different number of devices perform collaborative training using CFL and DFL. Long Short-Term Memory (LSTM) Network is used as the underlying approach for both the CFL and DFL mechanisms.
    \item To implement CFL, a client-server paradigm is developed using socket programming, and multiple clients are handled by the server in the conducted experiment. For transmission of model updates \textit{MLSocket} is used. For aggregation, Federated Averaging (FedAvg) is used. The performance of CFL-based framework has been evaluated in terms of prediction accuracy, precision, recall, F1-Score, and training time. The experimental results present that the CFL-based framework has better prediction accuracy and lower response time than the cloud-only framework where the cloud server analyses the data after receiving from the client. 
    \item To implement the DFL, a collaborative network is formed using mesh topology or ring topology. In the conducted experiment, each node receives model updates from the neighbour nodes (the neighbour nodes depend on the selected topology), and performs aggregation to build the upgraded model. The performance of the nodes in both the topology are evaluated in terms of prediction accuracy, precision, recall, F1-Score, and training time. 
    \item Finally, the research challenges with CFL and DFL-based frameworks in crop yield prediction are highlighted in this paper.
\end{itemize}

\subsection{Layout of The Paper}
The rest of the paper is organized as follows: Section \ref{rel} briefly discusses the existing literature on FL, smart agriculture, and crop yield prediction. Section \ref{pro} illustrates the use of CFL and DFL in crop yield prediction. An experimental case study is presented in Section \ref{perf} on the use of CFL and DFL in crop yield prediction. Section \ref{future} explores the future research directions of FL in crop yield prediction. Finally, Section \ref{con} concludes the paper. \par
The list of acronyms used in this paper are listed in Table \ref{tab:acro}.

\begin{table}[]
    \centering
    \caption{Acronyms with full forms}
    \begin{tabular}{c|c}
    \hline
      Acronyms   &  Full form\\
      \hline
      ANN & Artificial Neural Network\\
      ML & Machine Learning\\
      DL & Deep Learning\\
      FL & Federated learning\\
      CFL & Centralized Federated Learning\\
      DFL & Decentralized Federated Learning\\
      IoT & Internet of Things\\
      IoAT & Internet of Agricultural Things\\
      LSTM & Long Short-Term Memory Network\\
      Bi-LSTM & Bidirectional Long Short-Term Memory Network\\
      FedAvg & Federated Averaging\\
      KNN & K-Nearest Neighbours\\
      DT & Decision Tree\\
      RF & Random Forest\\
      XGBoost & Extreme Gradient Boosting\\
      SVM & Support Vector Machine\\
      MLP & MultiLayer Perceptron\\
      LGBM & Light Gradient Boosting Machine\\
      GRU & Gated Recurrent Unit\\
      MLR & Multiple Linear Regression\\
      NB & Naive Bayes\\
      DNN & Deep Neural Network\\
      RNN & Recurrent Neural Network\\
      P2P & Peer-to-Peer\\
      FTL & Federated Transfer Learning\\
      \hline
    \end{tabular}
    \label{tab:acro}
\end{table}

\section{Related Work}
\label{rel}
Crop yield prediction and recommendation is a crucial area of IoT-based smart farming practices (\cite{debauche2021data, boursianis2020internet}). There are several research works carried out on the use of ML and DL in crop yield prediction. In (\cite{thilakarathne2022cloud}), the authors explored the use of several ML algorithms such as KNN, DT, RF, XGBoost, and SVM, for crop yield prediction. In (\cite{bakthavatchalam2022iot}), the authors used MLP neural network, decision table, and JRip for crop yield prediction. In (\cite{cruziot}), KNN was used for crop yield prediction. In (\cite{kathiria2023smart}), the authors used several ML approaches such as DT, SVM, KNN, LGBM and RF for crop yield prediction. LSTM, Bi-LSTM, and GRU-based framework were used in (\cite{gopi2024red}) for data analysis to predict crop yield. For crop yield prediction, MLR with ANN was used in (\cite{gopal2019novel}). In (\cite{dey2024fly}), Bi-LSTM was used for data analysis, and for better network connectivity the use of small cell with computation ability was explored. However, none of the existing approaches adopted FL in their frameworks. The major disadvantage of the conventional ML-based framework is compromise with data privacy as data sharing takes place with the cloud for analysis, requirement of high network bandwidth that may not be available at rural regions, high response time, high network traffic, huge overhead on the cloud server, etc. To address all these issues, FL can be adopted in crop yield prediction.
\par
The concept of FL relies on local data analysis, collaborative training, and generating global as well as personalized models. As no individual dataset is shared and a distributed learning is performed, data privacy is protected (\cite{nguyen2021federated}). Further, due to distributed nature, can be adopted in various IoT applications including healthcare, agriculture, transportation system, etc. The use of FL in IoT was elaborated in (\cite{nguyen2021federated}). The use of FL in fog computing environment was explored in (\cite{zhu2024flight}). In (\cite{atitallah2023fedmicro}), for distributed data analytics FL and transfer learning were adopted to propose an intelligent microservices-based framework for IoT applications. The use of FL in agriculture was explored in a few research works. In (\cite{bera2024flag}), CFL was used for soil health monitoring for irrigation decision making. The authors used CFL based on LSTM and DNN in their work. In (\cite{manoj2022federated}), FL was used for soybean yield prediction using deep residual network-based regression models for risk management in agricultural production. In (\cite{friha2022felids}), FL was used for intrusion detection in IoT-based agricultural systems. For yield forecasting FL was used in (\cite{li2024model}). For efficient data sharing in agri-food sector, the use of FL was discussed in (\cite{durrant2022role}). For crop classification, FL was used by (\cite{idoje2023federated}). The authors adopted CFL for crop classification based on Gaussian NB in (\cite{idoje2023federated}). As we observe, the use of FL in crop yield prediction was explored in a few existing works, and most of them focused on the use of CFL. In this paper, we provide an experimental study of both the CFL and DFL in crop yield prediction based on LSTM. 
\par
In Table \ref{tab:comsur}, the existing works have been compared with respect to the proposed FL-based framework for crop yield prediction. As we observe from the table, most of the existing works rely on conventional ML/DL-based framework, and compared to the existing CFL-based framework, this work explores the use of both CFL and DFL (with mesh as well as ring-based networks) in crop yield prediction, and determines the training time as well as response time. 

\begin{sidewaystable}
\caption{\centering{Comparison of our work with existing crop yield prediction frameworks}}
\small
    \centering
    \begin{tabular}{|c|c|c|c|c|c|c|}
        \hline
        \textbf{Work} &	\textbf{Classifier} &  \textbf{Multi-crop} & \textbf{CFL} & \textbf{DFL}	& \textbf{Training} & \textbf{Response}\\
        & & \textbf{dataset}& \textbf{is used} & \textbf{is used}&  \textbf{time is} &  \textbf{time is}\\
         & & \textbf{is used}& &  &\textbf{measured}&\textbf{measured}\\
        \hline
        \cite{thilakarathne2022cloud} & RF, DT, KNN, & \cmark & \xmark & \xmark & \xmark& \xmark\\ 
& XGBoost, SVM	 & & & & &\\
        \hline
       \cite{bakthavatchalam2022iot}& MLP, Decision Table, & \cmark	 & \xmark & \xmark	 & Measured model & \xmark \\
  & JRip & & & &build time&\\
        \hline
       \cite{cruziot} & KNN &\cmark & \xmark & \xmark	 & \xmark  & \xmark\\
   &  & & & &&\\
  \hline
    \cite{kathiria2023smart} & DT, SVM, KNN, & \cmark & \xmark & \xmark & \xmark & \xmark\\
   & LGBM, RF & & & & &\\
   \hline
       \cite{gopi2024red} & LSTM, Bi-LSTM, & \cmark& \xmark & \xmark & \xmark & \xmark\\
    & GRU & & & & &\\
        \hline 
        \cite{idoje2023federated} & Gaussian NB & \cmark& \cmark & \xmark & \xmark & \xmark\\
    & & & & & &\\
        \hline 
        Our work   & LSTM & \cmark & \cmark & \cmark& \cmark & \cmark\\
        \hline 
    \end{tabular}
    \label{tab:comsur}
\end{sidewaystable}

\section{Federated Learning in Crop Yield Prediction}
\label{pro}
In Section \ref{intro}, we have briefly discussed on CFL and DFL. For crop yield prediction both the approaches can be used. In an IoT-based crop yield prediction framework, the IoT devices collect data of soil and environmental parameters such as temperature, humidity, rainfall, soil pH, Nitrogen, Phosphorous, Potassium level, etc. The collected IoT data is processed inside the cloud servers. However, for data privacy protection, to reduce latency and deal with poor network connectivity inside the rural regions containing the agricultural lands, FL is adopted in IoAT. The integration of FL with IoT-based crop yield prediction framework permits the local data analysis inside the edge devices that can work as the clients. For data analysis we have used LSTM in this work. To capture the temporal dependencies and retain the sequential nature of the soil and environmental data, LSTM is considered. The mathematical notations used in this work are defined in Table \ref{mathnote}. 

\begin{table}[]
    \centering
    \caption{Mathematical notations with definitions}
    \begin{tabular}{c|c}
    \hline
        Notations &  Definition\\
    \hline
       $\phi_t$  & Forget gate\\
       $\zeta_t$  & Input gate\\
       $\lambda_t$  & Output gate\\
       $weight_\phi$ & Weight matrix of forget gate\\
       $weight_\zeta$ & Weight matrix of input gate\\
       $weight_\lambda$ & Weight matrix of output gate\\
       $bias_\phi$ & Bias for forget gate\\
       $bias_\zeta$ & Bias for input gate\\
       $bias_\lambda$ & Bias for output gate\\
       $h_t$ & Present hidden state\\
       $h_{t-1}$ & Previous hidden state\\
       $x_t$ & Current input\\
       $\mathcal{\hat{C}}_t$ & Candidate value\\
       $C_t$ & Cell state\\
       $C_{t-1}$ & Previous cell state\\
       $Data_c$ & Data of client $c$\\
       $B$ & Batch size\\
       $\eta$ & Learning rate\\
       $N_b$ & Number of batches\\
       $N_e$ & Number of epochs\\
       $N_r$ & Number of rounds\\
       $m_c$ & Model updates of client $c$\\
       $N_c$ & Number of connected clients\\
       $f_c$ & Fraction of clients participating in CFL\\
       $m_s$ & Model parameter of the server\\
       $m_{final}$ & Final global model update\\
       $m_0$ & Initial model parameters of a node in DFL\\
       $m_p$ & Model updates of node $p$ in DFL\\
       $P$ & Set of nodes in the DFL framework\\
       $N_p$ & Number of neighbours nodes in DFL\\
       $\mathcal{L}$ & Loss function\\
       $\alpha$ & True positive\\
       $\beta$ & True negative\\
       $\gamma$ & False positive\\
       $\rho$ & False negative\\
       $\mathcal{A}$ & Accuracy\\
       $\mathcal{P}$ & Precision\\
       $\mathcal{R}$ & Recall\\
       $\mathcal{F}$ & F1-Score\\
       \hline
    \end{tabular}
    \label{mathnote}
\end{table}

\par
LSTM is an upgraded version of RNN that considers a memory cell which is controlled by input gate, forget gate, and output gate. LSTM maintains a chain-like structure that has four neural networks and different memory blocks referred to as cells. To learn long-term dependencies the gates play significant roles by retaining or discarding information in a selective manner. For the short-term memory a hidden state is maintained by the LSTM network, that is updated depending on the previous hidden state, input, and the current state of the memory cell. \\
In LSTM, the forget gate controls which information will be removed from the memory cell, and mathematically expressed as:
\begin{equation}
    \phi_t=\sigma(weight_\phi \dots [h_{t-1}, x_t] + bias_\phi)
\end{equation}
where $\sigma$ denotes the sigmoid function.\\
The input gate is used to add information to the memory cell, and mathematically expressed as:
\begin{equation}
    \zeta_t=\sigma(weight_\zeta \dots [h_{t-1}, x_t] + bias_\zeta)
\end{equation}
where, a sigmoid function is used to regulate the information and filter the values to retain. After that, $tanh$ function is used to create a vector having all possible values from $h_{t-1}$ and $x_t$, as follows:
\begin{equation}
    \mathcal{\hat{C}}_t=tanh(weight_{C} \dots [h_{t-1}, x_t] + bias_C)
\end{equation}
where $weight_{C}$ denotes the weight matrix and $bias_C$ denotes the bias.\\
Finally, the regulated values are multiplied with the values of the vector to get the information to be added to the memory cell, as follows:
\begin{equation}
    \mathcal{C}_t=\phi_t \odot C_{t-1} + \zeta_t \odot \mathcal{\hat{C}}_t
\end{equation}
The output gate that extracts the useful information from the current memory cell state as the output, is mathematically expressed as follows:
\begin{equation}
    \lambda_t=\sigma(weight_\lambda \dots [h_{t-1}, x_t] + bias_\lambda)
\end{equation}
Firstly, using $tanh$ function a vector is generated. After that, a sigmoid function is used to regulate the information and filter the values to retain using the inputs $h_{t-1}$ and $x_t$. Finally, the regulated values are multiplied with the values of the vector to be sent as an output as well as input to the next cell. As LSTM is able to capture long-term dependencies, LSTM performs well in sequence prediction tasks, time series, etc. 

\subsection{CFL-based framework}
In our CFL-based system, all the clients get the initial model parameters from the server, train their individual models using their local datasets, and then transmit the model updates to the server node. The server node works as the aggregator that receives model updates from all the clients and performs aggregation to update the global model accordingly. Here, for aggregation, we use FedAvg. The updated global model is sent to the participating clients. Hence, at the end of the process each client has its personalized local model and the global model. The client and server-side algorithms are stated in Algorithm \ref{algo_1} and Algorithm \ref{algo_2} respectively. In the algorithms, $c$ represents a client and $S$ represents the server. Algorithm \ref{algo_1} presents the steps of the client-side process, where each connected client gets model parameters from the server, trains the local model using its local dataset, and returns the model updates to the server. Algorithm \ref{algo_2} presents the steps of the server-side process, where the server receives model updates from the connected clients, performs aggregation, and updates the model accordingly. The pictorial representation of CFL is presented in Fig. \ref{fig:cfl}, where $N_c$ clients participate in a collaborative training process with the server that works as the aggregator to aggregate the model updates received from the clients to build the global model.  

\begin{figure*}
    \centering
    \includegraphics[width=0.99\linewidth, height=2.7in]{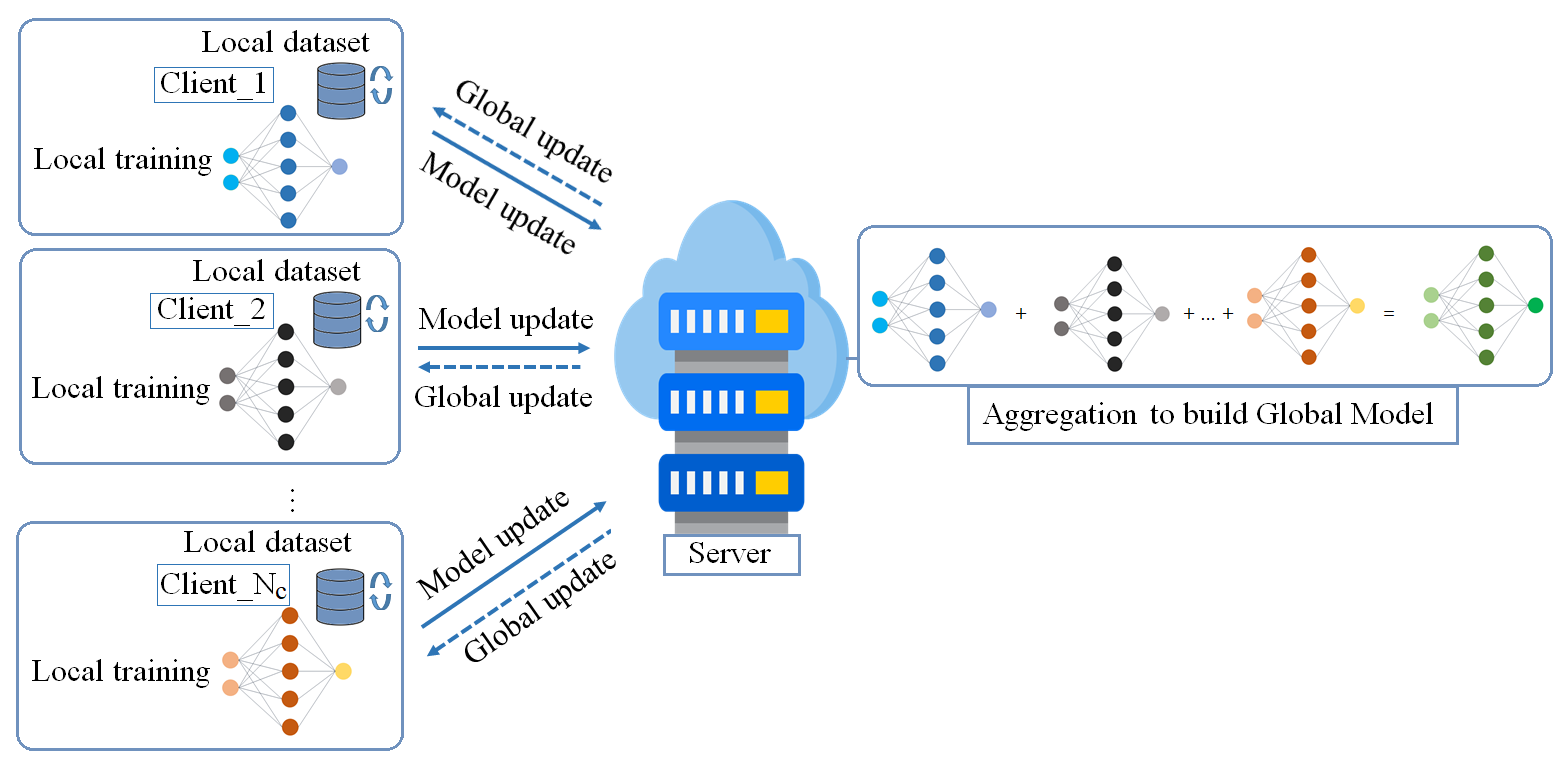}
    \caption{\centering{The CFL process}}
    \label{fig:cfl}
\end{figure*}

\begin{algorithm} 
\caption{Client-side algorithm}
\label{algo_1}
\begin{algorithmic}[1]
  \renewcommand{\algorithmicrequire}{\textbf{Input:}}
   \renewcommand{\algorithmicensure}{\textbf{Output:}}
  \Require $Data_c$, $N_b$, $N_e$
  \Ensure $m_{c}$\\
  \textbf{$Function\;Update(Data_c, N_b, N_e)$}:
  \State{$PData_c \gets Preprocess(Data_c)$} 
  \While{$connected \; with \; S$}
       \State{$Train(m_{c} \gets getmodelparameters())$}
    \EndWhile
    \State{$SaveParameters(m_{c})$}\\
   \textbf{$Function\;Train(m_{c})$}:
      \State{$N_b \gets Split(PData_c, B)$}    \Comment{split data into $N_b$ batches}
      \For{$e=0 \; to \; N_e-1$} 
           \For{$b=1\; to \; N_b$}
              \State{$m_c^{e+1} \gets m_c^{e}-\eta\nabla m_c^e$} \Comment{$\nabla m_c^e$ represents the gradient}
           \EndFor
    \EndFor
    \State{$m_c \gets m_c^{N_e}$}
    \State{$sendmodelupdate(m_c)$} \Comment{send model update to $S$}
\end{algorithmic}
\end{algorithm}

\begin{algorithm} 
\caption{Server-side Algorithm}
\label{algo_2}
\begin{algorithmic}[1]
  \renewcommand{\algorithmicrequire}{\textbf{Input:}}
   \renewcommand{\algorithmicensure}{\textbf{Output:}}
  \Require $N_c$, $f_c$, $N_r$
  \Ensure $m_{final}$\\
  \textbf{$Function\;Collect(N_c, N_r)$}:
  \State{$ConnectedClients \gets []$}
  \While{$(length(ConnectedClients) \neq N_c)$}
       \State{$listen()$}
       \State{$acceptconnection()$}
    \EndWhile
    \State{$FedAvg()$}
    \State{Release clients}  \\
   \textbf{$Function\;FedAvg()$}:
   \State{$m_s^0 \gets InitModel()$} \Comment{initial model is generated}
      \For{$r =1 \; to \; N_r$}
              \State{$M_r \gets Subset(max(f_c*N_c,1),``random")$}
              \State{$MU \; \gets \; []$}
             \For{$c \in M_r$}
             \State{$m_{c}^r \gets getmodelupdate(c)$} \Comment{get model update from client $c$ at round $r$}
                   \State{$MU.append(m_{c}^r)$}
                   \EndFor
                   \State{$m_s^{r+1} \gets \frac{1}{N_c} \sum_{c=1}^{N_c} \cdot {m_{c}^r}$}
                   \State{$sendtoclients(m_s^{r+1})$}
    \EndFor
    \State{$m_{final} \gets m_s^{N_r+1}$}
\end{algorithmic}
\end{algorithm}

\par
In CFL, the server is the aggregator and it distributes the model updates with the clients. The clients have their personalized models along with the global model update. Each of the clients can perform data analysis locally through a collaborative training process without sharing data. Hence, privacy is protected as well as through collaborative training prediction accuracy is enhanced. The time complexity of the CFL process depends on the time complexity of model initialization, local model training, exchange of model updates, and aggregation. The time complexity of model initialization is given as $O(1)$. The time complexity of local model training is given as $O(N_r \cdot N_e \cdot N_b \cdot m_c)$. The time complexity of exchanging model updates is given as $O(N_r \cdot N_c \cdot (m_c+m_s))$. The time complexity of aggregation is given as $O(N_r \cdot N_c \cdot m_c)$.  
\par
Though, there are several benefits, the CFL has some limitations. As the server performs as the aggregator, good network connectivity with the server is highly desirable. However, many applications do not have the provision of seamless network connectivity. Further, the overhead on the server is very high because the aggregation takes place inside the server. Further, sharing model updates by all the clients with the server may raise a concern regarding security. To address these limitations, DFL has come. 
\subsection{DFL-based framework}
In case of crop yield prediction, the data collection takes place at the rural regions, where the network connectivity is usually poor. In that case, the communication with the cloud server is a major issue. Therefore, if the network connectivity is poor, DFL can be used by the edge devices for collaborative training purpose. In DFL, the clients form a network among themselves and perform collaborative learning. Here, each node is a learner as well as contributor. In our work, we have considered two types of DFL frameworks where the clients form a network either using ring or mesh topology. The ring-based network is referred to as P2P network also, where each peer exchanges its model updates with two neighbour nodes. In case of the mesh-based network, each node exchanges its model updates with rest of the nodes in that network. The DFL process for ring-based and mesh-based networks are stated in Algorithm \ref{algo_3} and Algorithm \ref{algo_4} respectively, where $p$ denotes a node and $P$ denotes the set of nodes in the formed network. The pictorial representation of DFL using ring-based and mesh-based networks are presented in Figs. \ref{fig:ring} and \ref{fig:mesh}, where four nodes form a network using ring topology and mesh topology respectively.

\begin{figure*}
    \centering
\begin{minipage}{0.495\linewidth}
\includegraphics[width=0.99\linewidth, height=2.7in]{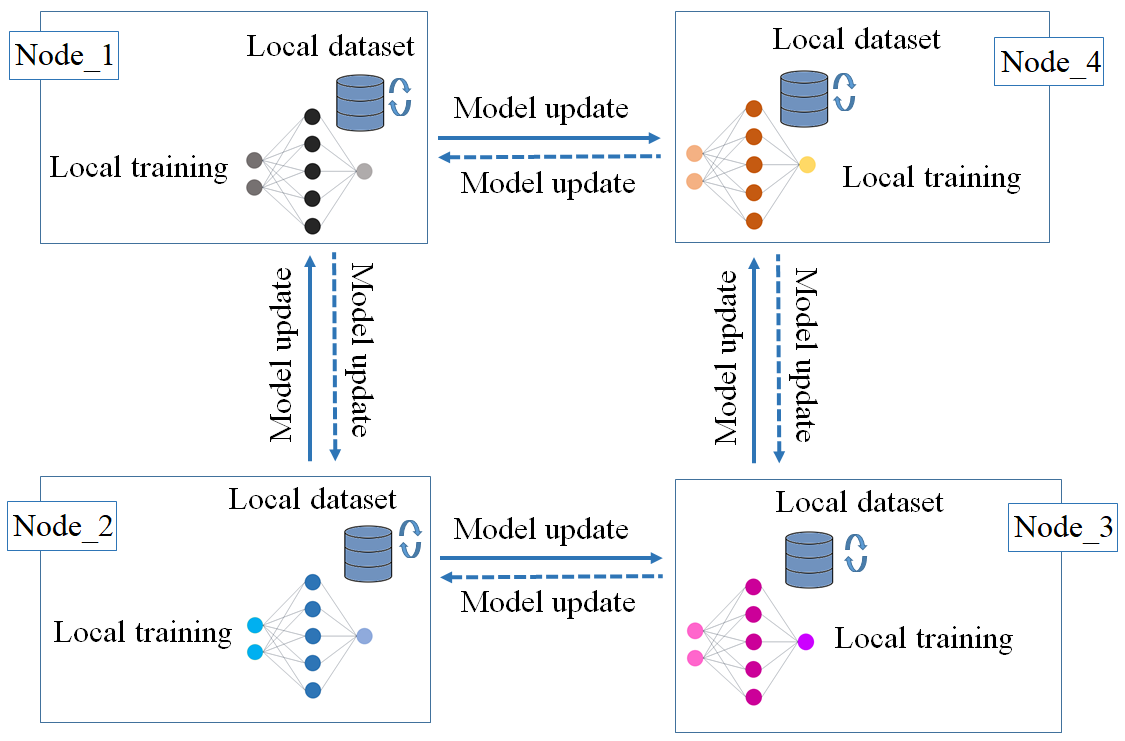}
\caption{\centering{DFL using ring-based network}}
\label{fig:ring}
\end{minipage}
\begin{minipage}{0.495\linewidth}
\includegraphics[width=0.99\linewidth, height=2.7in]{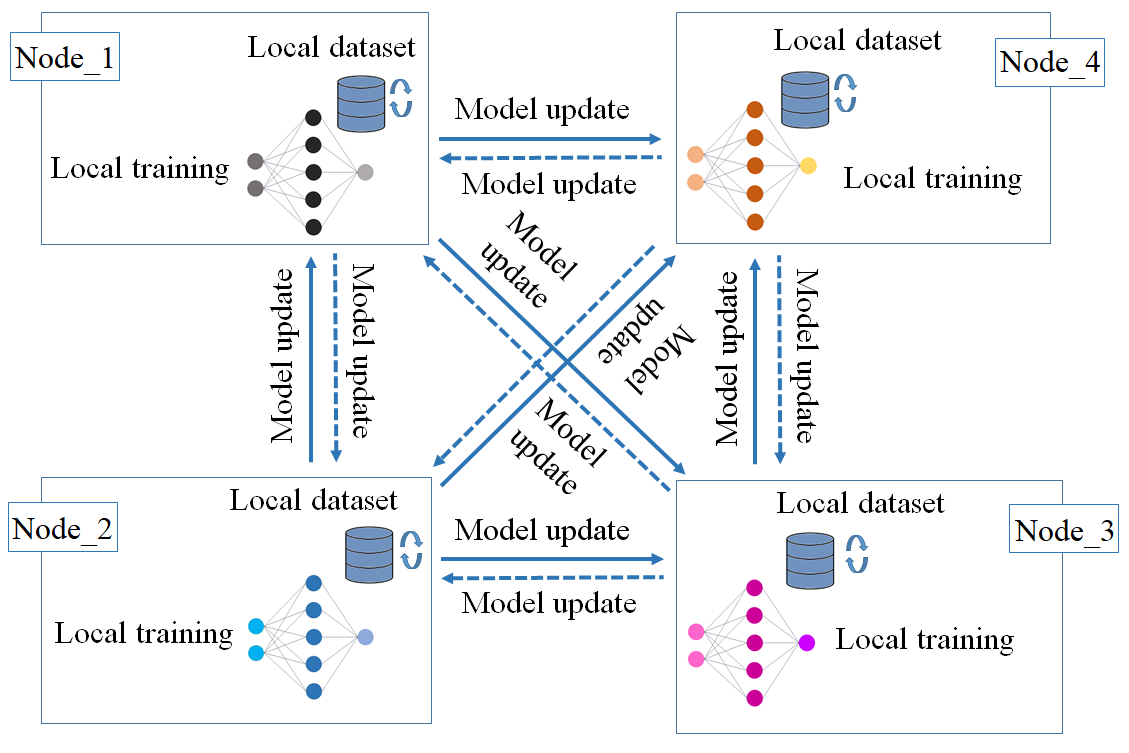}
\caption{\centering{DFL using mesh-based network}}
\label{fig:mesh}
\end{minipage}
\end{figure*}

\begin{algorithm}
\caption{Model update in Ring-based P2P Network}
\label{algo_3}
\begin{algorithmic}[1]
\renewcommand{\algorithmicrequire}
{\textbf{Input:}}\renewcommand{\algorithmicensure}
{\textbf{Output:}}\Require $P$, $N_p$, $m_0$, $N_r$
\Ensure $m_{p}, \forall p \in P$ \\
\textbf{$Function \; Training(N_p, m_0)$}:
\For{$p =1 \; to \; |P|$}
\State $m_p \gets m_0$ \Comment{Initialize model parameters}
\EndFor
\For{$r=1 \; to \; N_r$}
\For{$p = 1 \; to \; |P|$}   \Comment{Local model training}
\State{$PData_p \gets Preprocess(Data_p)$}
\State{$N_b \gets Split(PData_p, B)$}    \Comment{split data into $N_b$ batches}
      \For{$e=0 \; to \; N_e-1$} 
           \For{$b=1 \; to \; N_b$}
           \State{$m_p^{e+1} \gets m_p^{e}-\eta\nabla m_p^e$} \Comment{$\nabla m_p^e$ represents the gradient}
           \EndFor
    \EndFor 
\EndFor
\For{$p = 1 \; to \; |P|$}  \Comment{Exchange of model updates and aggregation}
\State{$M_{recv} \gets []$}
    \State Send $m_p$ to $p_{pre}$ and $p_{suc}$
    \State $m_j^r \gets getmodelupdate(j)$, where $j \in \{p_{pre}, p_{suc}\}$ 
    \State{$M_{recv}.append(m_j^r)$}
    \State $m_{p}^{r+1} \gets \sum_{j=1}^{N_p} m_j^r/N_p$  \Comment{Aggregate model updates received from $p_{pre}$ and $p_{suc}$}
\EndFor
\EndFor
\end{algorithmic}
\end{algorithm}

\begin{algorithm}
\caption{Model update in Mesh-based Network}
\label{algo_4}
\begin{algorithmic}[1]
\renewcommand{\algorithmicrequire}
{\textbf{Input:}}\renewcommand{\algorithmicensure}
{\textbf{Output:}}\Require $Data_p$, $P$, $N_p$, $m_0$, $N_r$
\Ensure $m_{p}, \forall p \in P$ \\
\textbf{$Function \; Training(N_p, m_0)$}:
\For{$p =1 \; to \; |P|$}
\State $m_p \gets m_0$ \Comment{Initialize model parameters}
\EndFor
\For{$r=1 \; to \; N_r$}
\For{$p = 1 \; to \; |P|$}   \Comment{Local model training}
\State{$PData_p \gets Preprocess(Data_p)$}
\State{$N_b \gets Split(PData_p, B)$}    \Comment{split data into $N_b$ batches}
      \For{$e =0 \; to \; N_e-1$} 
           \For{$b=1 \; to \; N_b$}
           \State{$m_p^{e+1} \gets m_p^{e}-\eta\nabla m_p^e$} \Comment{$\nabla m_p^e$ represents the gradient}
           \EndFor
    \EndFor 
\EndFor
\For{$p = 1 \; to \; |P|$}  \Comment{Exchange of model updates and aggregation}
\State{$M_{recv} \gets []$}
\For{$j = 1 \; to \; N_p$}
    \State Send $m_p$ to $j$, where $j \in (P-p)$ 
    \State $m_j^r \gets getmodelupdate(j)$, where $j \in (P-p)$ 
    \State{$M_{recv}.append(m_j^r)$}
    \EndFor
    \State $m_{p}^{r+1} \gets \sum_{j=1}^{N_p} m_j^r/N_p$ \Comment{Aggregate model updates received from all other nodes}
\EndFor
\EndFor
\end{algorithmic}
\end{algorithm}
We observe from Algorithm \ref{algo_3} that in the ring-based DFL approach, a node $p$ sends and receives model updates to and from its its preceding node ($p_{pre}$) and the succeeding node ($p_{suc}$). In the mesh-based DFL approach as presented in Algorithm \ref{algo_4}, a node $p$ sends and receives model updates to and from rest of the nodes of the network. In Algorithms \ref{algo_3} and \ref{algo_4}, we have initially considered an empty array $M_{recv}$. As the model updates are received from the neighbour nodes, the received updates are appended to store all the received updates inside $M_{recv}$. Finally, aggregation takes place based on the received updates from the nodes in both the ring-based and mesh-based networks. The time complexity of the DFL-based framework depends on the time complexity of model initialization, local model training, exchanging model updates, and aggregation. The time complexity of model initialization is given as $O(1)$. The time complexity of local model training is given as $O(N_r \cdot N_e \cdot N_b \cdot m_p)$. The time complexity of exchanging model updates is given as $O(N_r \cdot N_p \cdot (m_p+m_j))$. The time complexity of aggregation is given as $O(N_r \cdot N_p \cdot m_j)$, where $1 \leq j \leq N_p$.
\par
If a DFL framework contains three nodes, then the number of exchange of model updates will be same for both the mesh and ring-based networks. However, for the $number of nodes>=4$, the results will be different as the number of model updates exchange differ for the mesh and ring-based networks. 
\subsection{Proof of Convergence in FL}
\label{convergence}
For both the CFL and DFL approaches, the final objective is to minimize the global loss function $\mathcal{L}(m)$ that is mathematically defined as follows:
\begin{align}
\mathcal{L}(m) = \frac{1}{K} \sum_{i=1}^K \mathcal{L}_i(m)
\end{align}
where $K$ is the number of participating nodes, i.e. $K=N_c$ for CFL and $K=N_p$ for DFL, and $\mathcal{L}_i(m)$ is the local loss function at node $i$. 
\begin{definition}
\textit{Lipschitz Continuity}: $\mathcal{L}_i(m)$ is $\mathcal{L}$-Lipschitz continuous if there exists a constant $C_1 > 0$ such that $\forall  m, \mathbf{w}$, we have 
\begin{align}
\mathcal{L}_i(m) \leq \mathcal{L}_p(\mathbf{w}) + \nabla \mathcal{L}_p(\mathbf{w})^T (m - \mathbf{w}) + \frac{C_1}{2} \|m - \mathbf{w}\|^2  
\end{align}
\end{definition}
\begin{definition}
\textit{Bounded Variance}: The variance of the stochastic gradients is bounded if there exists a constant $C_2^2 > 0$ such that $\forall m$:
\begin{align}
\mathbb{E}[\|\nabla \mathcal{L}_i(m) - \nabla \mathcal{L}(m)\|^2] \leq C_2^2
\end{align}
\end{definition}
\begin{definition}
\textit{Unbiased Gradients}: The stochastic gradients are unbiased estimates of the true gradients if $\forall m$:
\begin{align}
\mathbb{E}[\nabla \mathcal{L}_i(m)] = \nabla \mathcal{L}(m)  
\end{align}
\end{definition}
\begin{assumption}
\textit{Smoothness}: The global loss function $\mathcal{L}(m)$ is smooth, i.e., there exists a constant $C_3 > 0$, such that $\forall m, \mathbf{w}$:
\begin{align}
\mathcal{L}(m) \leq \mathcal{L}(\mathbf{w}) + \nabla \mathcal{L}(\mathbf{w})^T (m - \mathbf{w}) + \frac{C_3}{2} \|m - \mathbf{w}\|^2  
\end{align}
\end{assumption}
\begin{assumption}
\textit{Strong Convexity}: The global loss function $\mathcal{L}(m)$ is strongly convex, i.e., there exists a constant $C_4 > 0$ such that $\forall  m, \mathbf{w}$:
\begin{align}
\mathcal{L}(m) \geq \mathcal{L}(\mathbf{w}) + \nabla \mathcal{L}(\mathbf{w})^T (m - \mathbf{w}) + \frac{C_4}{2} \|m - \mathbf{w}\|^2
\end{align}
\end{assumption}
\begin{lemma}
\textit{Gradient Bound:}
Under the assumptions of Lipschitz continuity and bounded variance, the gradient of $\mathcal{L}(m)$ is bounded:
\begin{align}
\mathbb{E}[\|\nabla \mathcal{L}(m)\|^2] \leq \frac{C_1}{K} \sum_{i=1}^K \|\nabla \mathcal{L}_i(m)\|^2 + \frac{C_2^2}{K}
\end{align}
\begin{proof}
By Lipschitz continuity of $\mathcal{L}_i(m)$:
\begin{align}
\|\nabla \mathcal{L}_i(m)\|^2 \leq C_1^2 \|m\|^2
\end{align}
Taking the expectation and summing over all nodes:
\begin{align}
\mathbb{E}[\|\nabla \mathcal{L}(m)\|^2] = \frac{1}{K^2} \sum_{i=1}^K \mathbb{E}[\|\nabla \mathcal{L}_i(m)\|^2] \leq \frac{C_1^2}{K^2} \sum_{i=1}^K \|m\|^2 + \frac{C_2^2}{K}
\end{align}
Thus,
\begin{align}
   \mathbb{E}[\|\nabla \mathcal{L}(m)\|^2] \leq \frac{C_1}{K} \sum_{i=1}^K \|\nabla \mathcal{L}_i(m)\|^2 + \frac{C_2^2}{K}
\end{align}
\end{proof}
\end{lemma}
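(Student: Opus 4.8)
The plan is to read the expectation in the statement as an average over a node index $i$ drawn uniformly from $\{1,\dots,K\}$, so that the \emph{Unbiased Gradients} definition identifies the mean of the local gradients with the global gradient, $\mathbb{E}_i[\nabla \mathcal{L}_i(m)] = \nabla \mathcal{L}(m)$, consistent with $\nabla \mathcal{L}(m) = \frac{1}{K}\sum_{i=1}^K \nabla \mathcal{L}_i(m)$. Under this reading the whole argument reduces to a single bias--variance decomposition of the second moment of $\nabla \mathcal{L}_i(m)$, combined with the \emph{Bounded Variance} definition; no heavy estimate is needed.

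First I would write the second moment of the local gradient as the exact identity $\mathbb{E}_i[\|X\|^2] = \|\mathbb{E}_i[X]\|^2 + \mathbb{E}_i[\|X - \mathbb{E}_i[X]\|^2]$ applied to $X = \nabla \mathcal{L}_i(m)$, namely
\begin{align}
\frac{1}{K}\sum_{i=1}^K \|\nabla \mathcal{L}_i(m)\|^2 = \|\nabla \mathcal{L}(m)\|^2 + \frac{1}{K}\sum_{i=1}^K \|\nabla \mathcal{L}_i(m) - \nabla \mathcal{L}(m)\|^2,
\end{align}
where the cross term cancels precisely because $\frac{1}{K}\sum_i (\nabla \mathcal{L}_i(m) - \nabla \mathcal{L}(m)) = 0$. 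Next I would rearrange to isolate the quantity of interest and drop the nonnegative fluctuation term, which is exactly Jensen's inequality for the convex map $v \mapsto \|v\|^2$:
\begin{align}
\|\nabla \mathcal{L}(m)\|^2 = \frac{1}{K}\sum_{i=1}^K \|\nabla \mathcal{L}_i(m)\|^2 - \frac{1}{K}\sum_{i=1}^K \|\nabla \mathcal{L}_i(m) - \nabla \mathcal{L}(m)\|^2 \leq \frac{1}{K}\sum_{i=1}^K \|\nabla \mathcal{L}_i(m)\|^2.
\end{align}

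To land on the exact right-hand side of the lemma I would then relax this clean bound. Taking the \emph{Lipschitz continuity} constant to satisfy $C_1 \geq 1$ in the normalized setting used here gives $\frac{1}{K} \leq \frac{C_1}{K}$, and since $\frac{C_2^2}{K} \geq 0$ by the \emph{Bounded Variance} definition, appending it only enlarges the right-hand side. Chaining these two relaxations yields $\mathbb{E}[\|\nabla \mathcal{L}(m)\|^2] \leq \frac{C_1}{K}\sum_{i=1}^K \|\nabla \mathcal{L}_i(m)\|^2 + \frac{C_2^2}{K}$, as claimed.

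The main obstacle is not any difficult estimate but reconciling the deliberately loose constants in the target inequality with the tight identity that the decomposition produces: the sharp argument delivers coefficient $1/K$ and no additive term, whereas the statement carries $C_1/K$ and an extra $C_2^2/K$. I would therefore be careful to make explicit (i) that the expectation is over the uniform node index, so the left-hand side is simply $\|\nabla \mathcal{L}(m)\|^2$; (ii) exactly where convexity/Jensen enters, namely in discarding the variance term; and (iii) the assumption $C_1 \geq 1$ that lets the coefficient $C_1/K$ dominate the exact coefficient $1/K$. If one prefers not to assume $C_1 \geq 1$, the alternative is to route all the slack through the variance term by invoking the gradient-norm bound $\|\nabla \mathcal{L}_i(m)\|^2 \leq C_1^2 \|m\|^2$ implied by Lipschitz continuity, so that the $\frac{C_2^2}{K}$ term arises directly from \emph{Bounded Variance} and the $C_1$ factor from the smoothness constant.
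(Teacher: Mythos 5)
Your proof is correct (under the two caveats you yourself flag) and takes a genuinely different --- and considerably sounder --- route than the paper's. The paper's own proof proceeds by first claiming that Lipschitz continuity yields $\|\nabla \mathcal{L}_i(m)\|^2 \leq C_1^2 \|m\|^2$ (which does not follow from its stated definition unless one additionally assumes $\nabla \mathcal{L}_i(0)=0$), then writing the equality $\mathbb{E}[\|\nabla \mathcal{L}(m)\|^2] = \frac{1}{K^2}\sum_{i=1}^K \mathbb{E}[\|\nabla \mathcal{L}_i(m)\|^2]$, which silently discards the cross terms $\langle \nabla \mathcal{L}_i(m), \nabla \mathcal{L}_j(m)\rangle$, $i \neq j$, in the expansion of $\bigl\|\frac{1}{K}\sum_i \nabla \mathcal{L}_i(m)\bigr\|^2$, then appending the $\frac{C_2^2}{K}$ term without derivation, and finally converting a bound in terms of $\|m\|^2$ back into one in terms of $\sum_i \|\nabla \mathcal{L}_i(m)\|^2$ with no justification. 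Your bias--variance decomposition instead treats the cross terms exactly (they vanish because $\frac{1}{K}\sum_i (\nabla \mathcal{L}_i(m) - \nabla \mathcal{L}(m)) = 0$), and Jensen's inequality gives the sharp bound $\|\nabla \mathcal{L}(m)\|^2 \leq \frac{1}{K}\sum_i \|\nabla \mathcal{L}_i(m)\|^2$ with no appeal to Lipschitzness or bounded variance at all; the lemma's looser statement is then recovered by the two relaxations you describe. What your route buys is that every step is either an identity or a legitimate inequality; what it costs is the extra hypothesis $C_1 \geq 1$, which the paper nowhere assumes (its definition only posits $C_1 > 0$) --- and since the lemma's constants are loose by design, flagging this explicitly, as you do, is exactly right. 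One caution: your proposed fallback for the case $C_1 < 1$, which routes the slack through $\|\nabla \mathcal{L}_i(m)\|^2 \leq C_1^2 \|m\|^2$, would inherit precisely the unjustified step of the paper's proof, so it does not rescue the general case; when $C_1 < 1$ the stated inequality simply is not implied by your (correct) sharp bound, and that is a defect of the lemma as stated rather than of your argument.
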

\begin{theorem}
Under the Lipschitz continuity, bounded variance, unbiased gradients, smoothness, and strong convexity, the FL-based framework converges to a stationary point of $\mathcal{L}(m)$.
\end{theorem}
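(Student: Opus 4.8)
The plan is to reduce one communication round of either the CFL or DFL scheme to a single stochastic descent step on the global objective $\mathcal{L}(m)$, and then to run the standard smooth-and-strongly-convex analysis on the resulting recursion. First I would observe that, because strong convexity (the second Assumption, with constant $C_4$) forces $\mathcal{L}$ to admit a unique minimizer $m^{\ast}$, every stationary point coincides with $m^{\ast}$; hence it suffices to show that the expected optimality gap $\mathbb{E}[\mathcal{L}(m_r) - \mathcal{L}(m^{\ast})]$ tends to zero (or to an $O(\eta)$ floor) as the round index $r$ grows.

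The key steps, in order, would be the following. (i) Write the aggregated update of a round as $m_{r+1} = m_r - \eta g_r$, where $g_r$ is the averaged stochastic gradient produced by the participating nodes. (ii) Apply the descent inequality supplied by the smoothness Assumption (constant $C_3$) with $\mathbf{w} = m_r$ and $m = m_{r+1}$, giving
\begin{align}
\mathcal{L}(m_{r+1}) \leq \mathcal{L}(m_r) - \eta \nabla \mathcal{L}(m_r)^{T} g_r + \frac{C_3 \eta^2}{2}\|g_r\|^2 .
\end{align}
(iii) Take the conditional expectation over the sampling at round $r$: unbiasedness turns the linear term into $-\eta \|\nabla \mathcal{L}(m_r)\|^2$, while unbiasedness together with the bounded-variance Definition (and Lemma IV.1) yields $\mathbb{E}[\|g_r\|^2] \leq \|\nabla \mathcal{L}(m_r)\|^2 + C_2^2/K$. (iv) Choosing a step size with $\eta \leq 1/C_3$ then collapses these into
\begin{align}
\mathbb{E}[\mathcal{L}(m_{r+1})] \leq \mathcal{L}(m_r) - \frac{\eta}{2}\|\nabla \mathcal{L}(m_r)\|^2 + \frac{C_3 C_2^2 \eta^2}{2K} .
\end{align}

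Next I would convert the gradient term into the optimality gap by invoking strong convexity, which implies the gradient-dominance (Polyak--\L{}ojasiewicz) inequality $\|\nabla \mathcal{L}(m)\|^2 \geq 2 C_4 (\mathcal{L}(m) - \mathcal{L}(m^{\ast}))$. Substituting this and subtracting $\mathcal{L}(m^{\ast})$ from both sides produces the contraction
\begin{align}
\mathbb{E}[\mathcal{L}(m_{r+1}) - \mathcal{L}(m^{\ast})] \leq (1 - \eta C_4)\, \mathbb{E}[\mathcal{L}(m_r) - \mathcal{L}(m^{\ast})] + \frac{C_3 C_2^2 \eta^2}{2K} .
\end{align}
Unrolling this geometric recursion over $N_r$ rounds shows linear convergence to a neighbourhood of $m^{\ast}$ whose radius scales like $\eta C_3 C_2^2 / (2 K C_4)$ for a constant step, and exact convergence to the stationary point $m^{\ast}$ under a diminishing schedule $\eta_r \to 0$ with $\sum_r \eta_r = \infty$; either way $\mathbb{E}[\|\nabla \mathcal{L}(m_r)\|^2] \to 0$.

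The hard part will be justifying step (i): the clean ``one round equals one stochastic gradient step'' reduction is exact only when each node performs a single local update, whereas the algorithms run $N_e$ local epochs per round, so I would have to control the resulting client drift (the bias between the averaged multi-step update and a true global gradient step), typically via an $O(\eta^2 N_e^2)$ term absorbed into the variance. For the DFL variants the situation is more delicate, because neighbour-only mixing in the ring and mesh topologies introduces a consensus error that must be shown to contract alongside the optimality gap; I would handle this by tracking the disagreement $\sum_{p}\|m_p - \bar m\|^2$ as a second Lyapunov quantity and bounding it through the mixing structure of the topology. A secondary wrinkle is that the stated Lemma IV.1 bounds the gradient norm by a quantity growing like $\|m\|^2$, which is awkward near the optimum, so in practice I would derive the second-moment estimate $\mathbb{E}[\|g_r\|^2] \leq \|\nabla \mathcal{L}(m_r)\|^2 + C_2^2/K$ directly from the unbiasedness and bounded-variance definitions rather than from that lemma.
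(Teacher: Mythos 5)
Your proposal is correct and its first half coincides with the paper's argument: the paper likewise writes one round as an averaged SGD step, applies the smoothness inequality, and takes expectations using unbiasedness and bounded variance to reach a per-round descent bound of the form $\mathbb{E}[\mathcal{L}(m^{r+1})] \leq \mathcal{L}(m^r) - \frac{\eta}{K}\|\nabla \mathcal{L}(m^r)\|^2 + \frac{C_3\eta^2 C_2^2}{2K}$. Where you diverge is the finish. The paper telescopes this inequality over $N_r$ rounds, rearranges to bound the Ces\`aro average $\frac{1}{N_r}\sum_{r=1}^{N_r}\mathbb{E}[\|\nabla\mathcal{L}(m^r)\|^2]$, and lets $N_r \to \infty$ --- a standard nonconvex-SGD argument that never invokes strong convexity, even though it is listed among the hypotheses. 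You instead use strong convexity through the Polyak--\L{}ojasiewicz inequality to obtain a geometric contraction of the optimality gap, which buys you a convergence \emph{rate}, actually uses all the stated assumptions, and --- importantly --- handles the variance floor honestly: you note that with constant step size one only converges to an $O(\eta)$ neighbourhood, and that exact convergence needs a diminishing schedule. The paper's route does not do this; its final limit claim silently drops the non-vanishing residual $\frac{C_3 \eta C_2^2}{2K}$, so for fixed $\eta$ its conclusion $\lim_{N_r\to\infty}\frac{1}{N_r}\sum_r \mathbb{E}[\|\nabla\mathcal{L}(m^r)\|^2] = 0$ does not actually follow from its own inequality, whereas your contraction argument makes the correct statement. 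The difficulties you flag in step (i) --- client drift from $N_e>1$ local epochs, the consensus error in the ring/mesh DFL mixing, and the unusable bound in Lemma IV.1 --- are genuine, but the paper ignores all three in exactly the same way (it treats one multi-epoch round as a single gradient step and never cites its own lemma), so relative to the paper's level of rigor your proposal is not missing anything; it is, if anything, the more careful of the two.
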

\begin{proof}
\textit{Step 1: Local Update Rule}: Each node $i$ performs local updates using stochastic gradient descent for $N_e$ local epochs. Let $m_i^{e}$ denotes the model parameters at node $i$ at epoch $e$, then
\begin{align}
m_i^{e+1} = m_i^{e} - \eta\nabla m_i^{e}
\end{align}
\textit{Step 2: Aggregation}: After $N_e$ epochs, each node exchanges its model updates with other nodes in DFL, and with the server in CFL. For  CFL, the aggregation takes place at the server. After receiving the model update, the client trains with local dataset, and sends the updates to the server. \\
For CFL, the global model update at round $r$ is defined as:
\begin{align}
m^{r+1} = \frac{1}{K} \sum_{i=1}^{K} m_i^{r}
\end{align}
where $K=N_c$.\\
For DFL, each node aggregates the updates after receiving from neighbour nodes ($N_p$). Hence, the local model update at round $r$ is defined as:
\begin{align}
m_i^{r+1} = \frac{1}{N_p} \sum_{j=1}^{N_p} m_j^{r}
\end{align}
The global model update at round $r$ is defined as:
\begin{align}
m^{r+1} = \frac{1}{K} \sum_{i=1}^{K} m_i^{r+1}
\end{align}
where $K=N_p$.\\
\textit{Step 3: Bounding the Global Loss}: Using the smoothness assumption, the change in $\mathcal{L}(m)$ is expressed as follows:
\begin{align}
\mathcal{L}(m^{r+1}) \leq \mathcal{L}(m^r) + \nabla \mathcal{L}(m^r)^{N_r} {(m^{r+1} - m^r)}+ \frac{C_2}{2} \|m^{r+1} - m^r\|^2 
\end{align}
Taking the expectation over the stochastic gradients, we obtain
\begin{align}
\mathbb{E}[\mathcal{L}(m^{r+1})] \leq \mathcal{L}(m^r) - \frac{\eta}{K} \|\nabla \mathcal{L}(m^r)\|^2 + \frac{C_3\eta^2 C_2^2}{2K}   
\end{align}
Now, summing this inequality over $N_r$ rounds, we get
\begin{align}
\sum_{r=1}^{N_r} \mathbb{E}[\mathcal{L}(m^{r+1}) - \mathcal{L}(m^r)] \leq - \frac{\eta}{K} \sum_{r=1}^{N_r} \|\nabla \mathcal{L}(m^r)\|^2 + \frac{C_3 \eta^2 C_2^2 {N_r}}{2K}
\end{align}
After rearranging terms, we find
\begin{align}
\frac{1}{N_r} \sum_{r=1}^{N_r} \mathbb{E}[\|\nabla \mathcal{L}(m^r)\|^2] \leq \frac{\mathcal{L}(m^1) - \mathcal{L}(m^{N_r+1})}{\eta N_r} + \frac{C_3 \eta C_2^2}{2K}
\end{align}
As $N_r \to \infty$, the term $\frac{\mathcal{L}(m^1) - \mathcal{L}(m^{N_r+1})}{\eta N_r}$ approaches zero, ensuring the following
\begin{align}
\lim_{N_r \to \infty} \frac{1}{N_r} \sum_{r=1}^{N_r} \mathbb{E}[\|\nabla \mathcal{L}(m^r)\|^2] = 0
\end{align}
This demonstrates that the global model updates $m$ converge to a stationary point of $\mathcal{L}(m)$.
\end{proof}
\subsection{Performance metrics}
In the next section, we have analysed the performance of both the CFL and DFL-based frameworks in crop yield prediction in terms of prediction accuracy, precision, recall, F1-Score, and training time.  
\par
The accuracy of a model is determined as follows:
\begin{equation}
    \mathcal{A}=\frac{\alpha+\beta}{\alpha+\beta+\gamma+\rho}
\end{equation}
\par
The precision of a model is determined as follows:
\begin{equation}
    \mathcal{P}=\frac{\alpha}{\alpha+\gamma}
\end{equation}
\par
The recall of a model is determined as follows:
\begin{equation}
    \mathcal{R}=\frac{\alpha}{\alpha+\rho}
\end{equation}
\par
The F1-Score of a model is determined as follows:
\begin{equation}
    \mathcal{F}=2*\frac{\mathcal{P}*\mathcal{R}}{\mathcal{P}+\mathcal{R}}
\end{equation}
\par
We have already discussed the time complexity of CFL and DFL, where we observe that the time consumption depends on the time consumed for model initialization, training, exchange of model updates, and aggregation. 
\par
Therefore, in the CFL-based approach, the training time of the server is determined as the sum of the time consumption for model initialization ($T_{init_1}$), model training ($T_{train_1}$), exchanging updates with the clients ($T_{ex_1}$), and aggregation ($T_{agg_1}$), given as,
\begin{equation}
    T_{CFL}=T_{init_1}+T_{train_1}+T_{ex_1}+T_{agg_1}
\end{equation}
\par
In the DFL-based framework, the training time of a node is determined as the sum of the time consumption for model initialization ($T_{init_2}$), model training ($T_{train_2}$), exchanging updates with the neighbour nodes ($T_{ex_2}$), and aggregation ($T_{agg_2}$), given as, 
\begin{equation}
    T_{DFL}=T_{init_2}+T_{train_2}+T_{ex_2}+T_{agg_2}
\end{equation}
\section{Performance Evaluation}
\label{perf}
This section describes the implementation, experimental setup used for analysis, and then analyses the performance of CFL and DFL based on the experimental setup.  

\subsection{Implementation}
A design and implementation of the architecture of CFL shown in Fig. \ref{fig:cfl} is presented in Fig. \ref{fig:implement} along with detailing interaction between FL clients and the global server. In Figs. \ref{fig:ring} and \ref{fig:mesh}, we have presented the architecture of DFL using ring and mesh topology respectively. Here, we present the implementation diagrams of DFL using ring and mesh topology with four nodes in Figs. \ref{ringimp} and \ref{meshimp}, respectively. For implementation, we use Python language. \textit{Tensorflow} is used along with LSTM supported by it. To build the client-server model and communication over the network, socket programming is used. For the transfer of model updates,  we used \textit{MLSocket}. For secure communication, Secure Shell protocol is used. The considered dataset is split, and assigned to the clients as the local datasets, and to the server as the global dataset. As the first and second dense layer activation function \textit{ReLU} is used. \textit{Softmax} is used as the Output layer activation function. \textit{Sparse Categorical Crossentropy} is used as the Loss function, and as the optimizer \textit{Adam} is used. 
\begin{figure*}
    \centering
    \includegraphics[width=0.99\linewidth, height=3.2in]{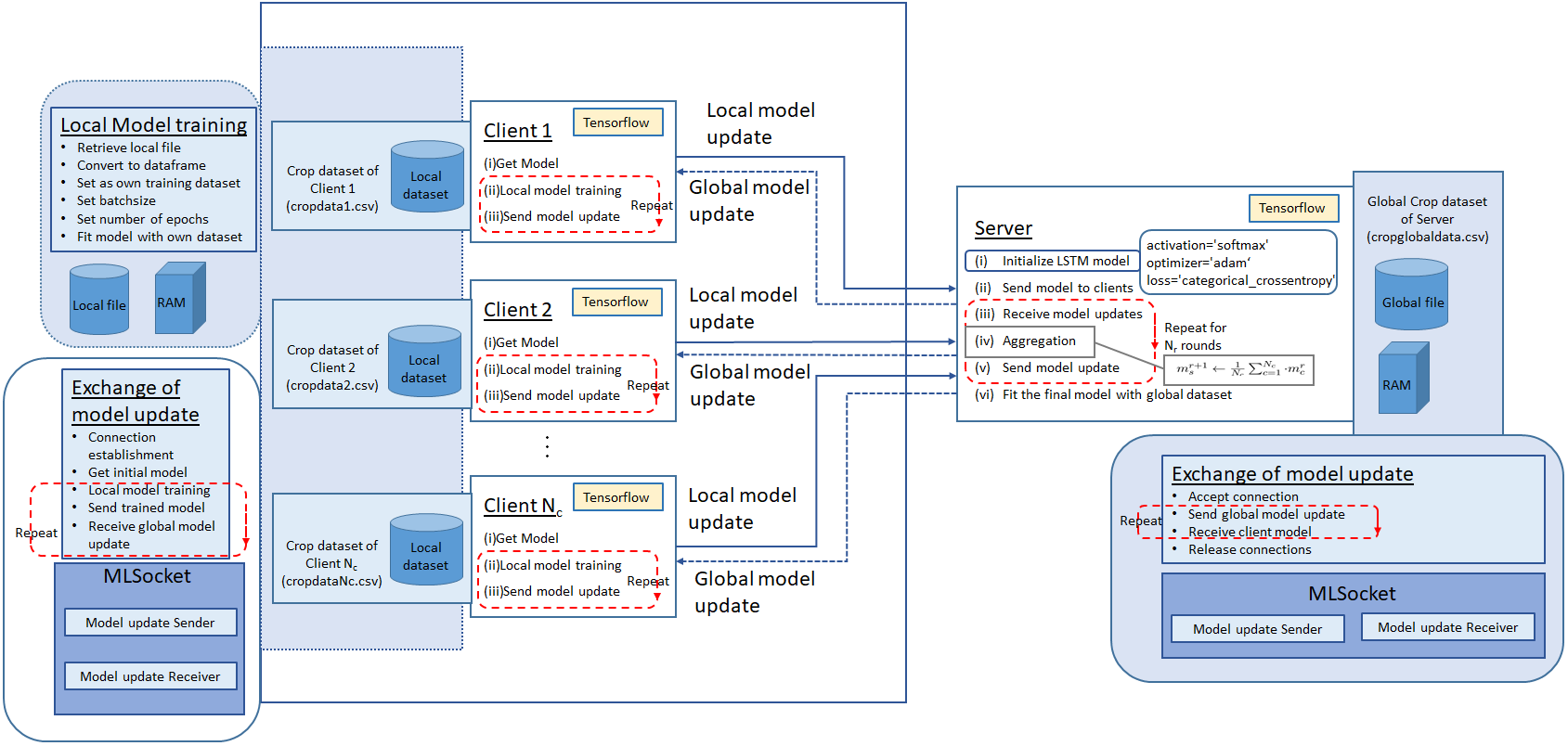}
    \caption{\centering{The experimental implementation diagram of CFL-based crop yield prediction}}
    \label{fig:implement}
\end{figure*}
\subsection{Experimental setup}
The experiment was conducted in the CLOUDS lab, The University of Melbourne. We created \textit{sixteen instances (H1 to H16)} over \textit{RONIN Cloud Platform}. The configuration of each instance is:
\begin{itemize}
    \item 4GB RAM
    \item 2 vCPUs
    \item 100GB SSD
\end{itemize}
Among these instances \textit{one instance} (H3) was selected as the \textit{server machine}, and other \textit{fifteen instances} (H1, H2, H4-H15) served as the \textit{client machines}. The performance of CFL and DFL in crop yield prediction were analysed in terms of prediction accuracy and training time. For experimental analysis we used the dataset{\footnote{\url{https://www.kaggle.com/datasets/atharvaingle/crop-recommendation-dataset}}}, containing 2200 samples of 22 different classes (rice, maize, pigeonpeas, chickpea, mothbeans, mungbean, kidneybeans, blackgram, lentil, jute, grapes, pomegranate, watermelon, mango, muskmelon, orange, papaya, banana, apple, coconut, cotton, coffee). There are seven features: Nitrogen (N), Phosphorous (P), Potassium (K), temperature, pH, humidity, and rainfall, which were considered as the input, and the crop was considered as the output. The learning rate was considered 0.001. The number of local epochs was considered 100 in both CFL and DFL. 
\begin{figure*}
    \centering
    \begin{minipage} {0.49\linewidth}
    \includegraphics[width=0.99\linewidth, height=3.2in]{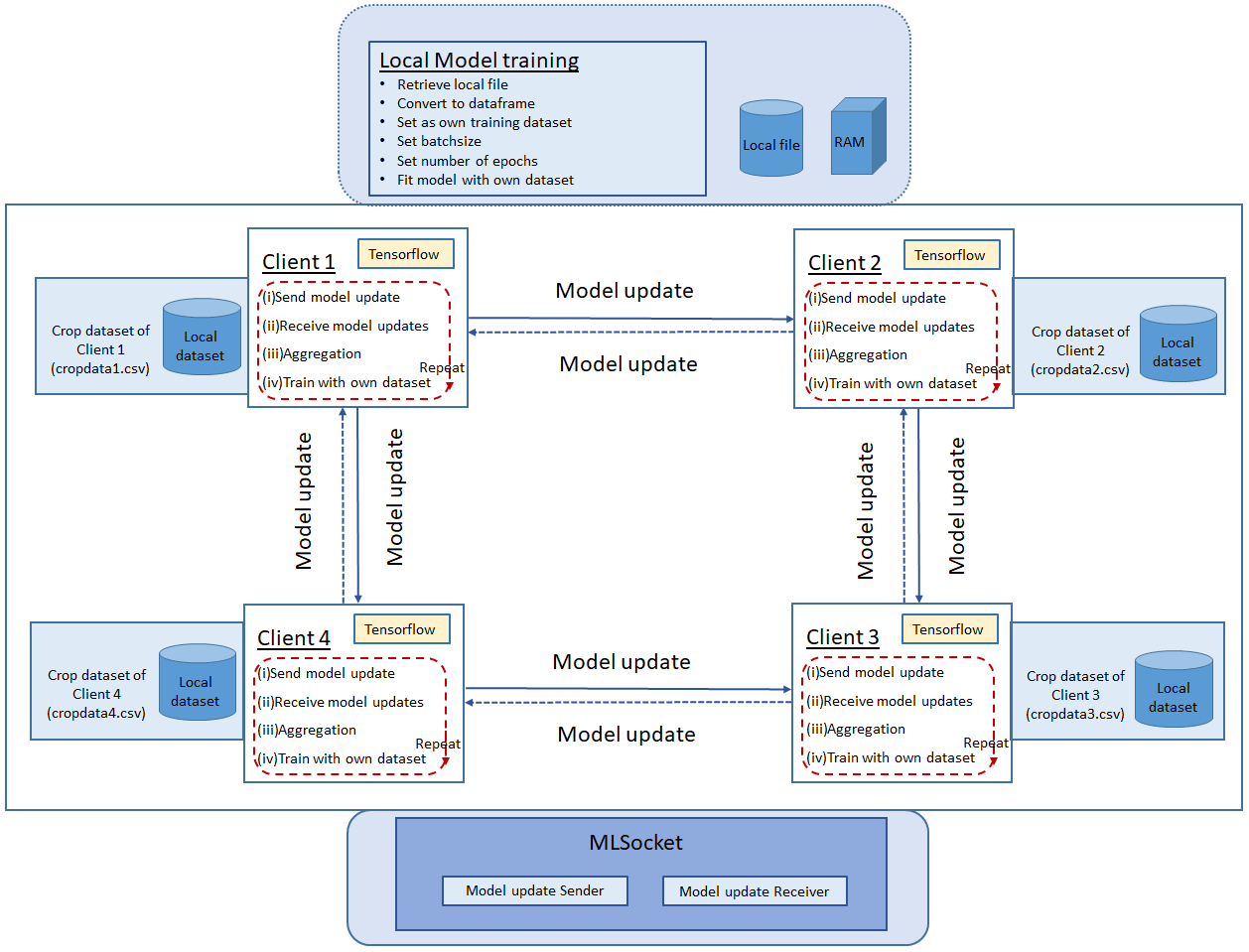}
    \caption{\centering{The experimental implementation diagram of DFL-based crop yield prediction using ring topology}}
    \label{ringimp}
    \end{minipage}
\begin{minipage} {0.49\linewidth}
    \includegraphics[width=0.99\linewidth, height=3.2in]{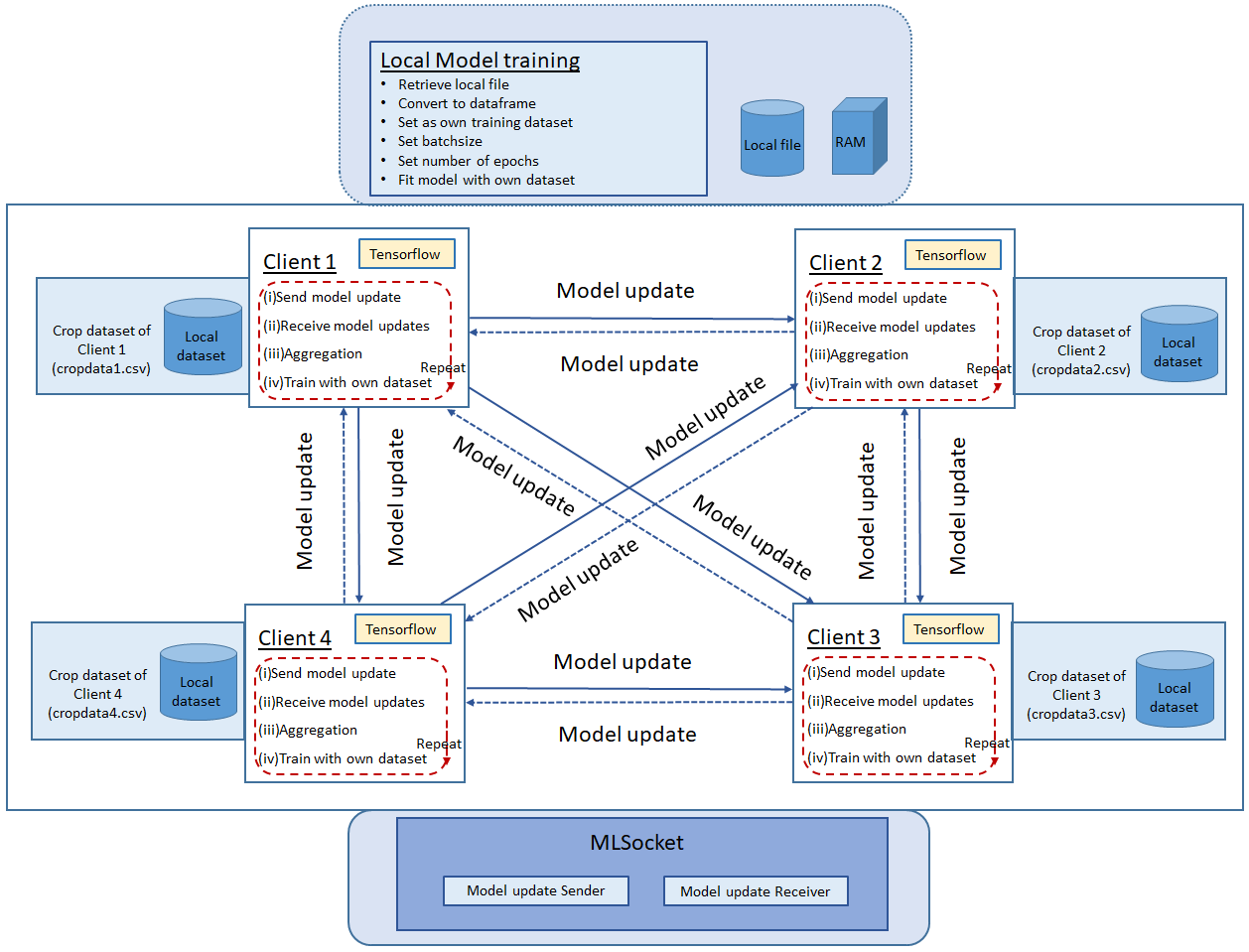}
    \caption{\centering{The experimental implementation diagram of DFL-based crop yield prediction using mesh topology}}
    \label{meshimp}
\end{minipage}
    \centering
\end{figure*}

\subsection{CFL in crop yield prediction}
In CFL, we performed three case studies: (i) Scenario 1: Five instances as client machines and one instance as the server machine, (ii) Scenario 2: Ten instances as client machines and one instance as the server machine, and (iii) Scenario 3: Fifteen instances as client machines and one instance as the server machine. Each of the client machines had its local dataset, and the server machine had the global dataset. The client machines shared their model updates with the server machine. The server machine performed aggregation, and updated the global model. The global model update was then shared with the clients. The maximum number of rounds was considered 10. However, we achieved a global model with accuracy of above 0.95 after two rounds, and the difference between the accuracy level for two consecutive rounds was $<0.001$ after three rounds.
\par
In \textit{Scenario 1}, five instances (H1, H2, H4, H5, and H6) worked as the client machines and shared their model updates with instance H3 acting as the server machine. The server updated its global model after aggregation, and shared the model update with the five clients. The prediction accuracy of the global model before and after FL are presented in Figs. \ref{accg1} and \ref{accg2} respectively. 
\begin{figure*}
    \centering
\begin{minipage}{0.495\linewidth}
\includegraphics[width=0.99\linewidth, height=2.7in]{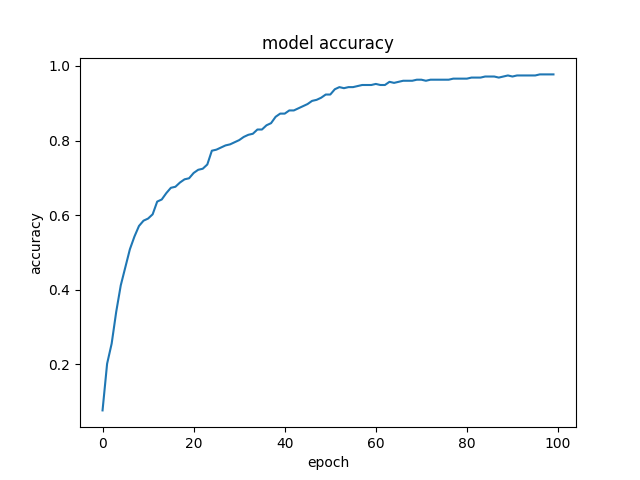}
\caption{\centering{Accuracy of the global model before FL}}
\label{accg1}
\end{minipage}
\begin{minipage}{0.495\linewidth}
\includegraphics[width=0.99\linewidth, height=2.7in]{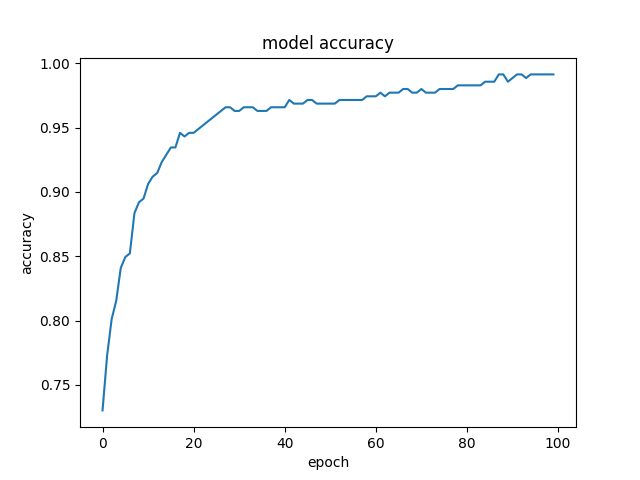}
\caption{\centering{Accuracy of the global model after FL}}
\label{accg2}
\end{minipage}
\end{figure*}
\begin{figure*}
    \centering
\begin{minipage}{0.49\linewidth}
\includegraphics[width=0.99\linewidth, height=2.7in]{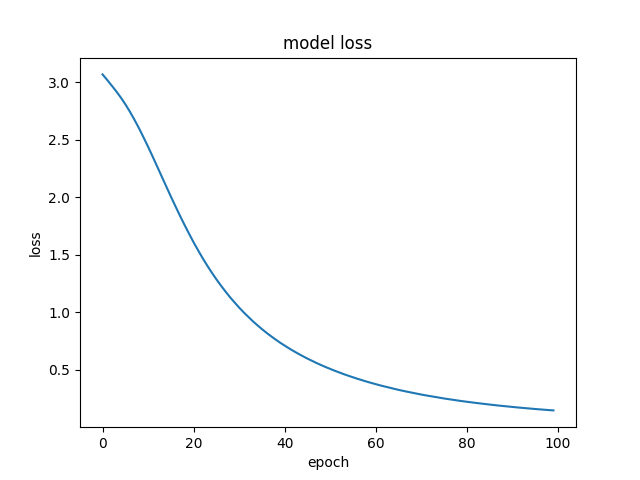}
\caption{\centering{Global loss before FL}}
\label{loss_nofl}
\end{minipage}
\begin{minipage}{0.495\linewidth}
\includegraphics[width=0.99\linewidth, height=2.7in]{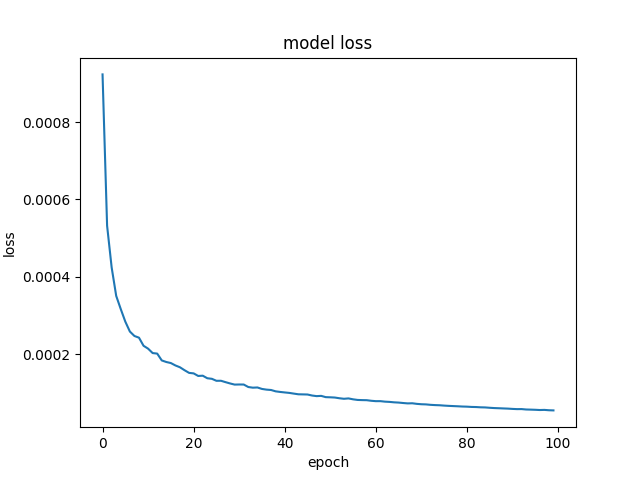}
\caption{\centering{Global loss after using CFL}}
\label{loss_cfl}
\end{minipage}
\end{figure*}
As we observed from the experiment, the global model's accuracy was improved from $\sim$0.93 to $\sim$0.98 using CFL. The global loss with respect to the number of epochs are presented in Figs. \ref{loss_nofl} and \ref{loss_cfl}, without using FL and using CFL, respectively. Without using FL, the loss after 100 epochs is $<$0.5, and using CFL, the global loss is $<$0.0002 after 100 epochs, as we observe from the figures. In Section \ref{convergence}, we have theoretically proved that the global loss tends to 0 as the number of rounds increases. Here, we have presented the result after three rounds, and we observed from the experiment that after 10 rounds the loss is below 0.0001, which is almost negligible.   
\par
We conducted the experiment for \textit{Scenarios 2 and 3} also. The prediction accuracy we obtained for the global model after CFL in both the scenarios is 0.97. The prediction accuracy, precision, recall, and F1-Score for the three scenarios 1, 2, and 3, are presented in Fig. \ref{acc_global}. As we observe, the accuracy, precision, recall, and F1-Score for scenarios 2 and 3 are 0.97, and for scenario 1 all the accuracy metrics provide the value of 0.98. The prediction accuracy, precision, recall, and F1-Score for the clients are presented in Fig. \ref{acc_local}. From the experiment we observed that the accuracy, precision, recall, and F1-Score of all the local models after CFL were $\geq$0.95.  For the local models, the average accuracy obtained for scenarios 1, 2, and 3 were 0.97, 0.96, and 0.96 respectively. We observe from Figs. \ref{acc_global} and \ref{acc_local} that the accuracy, precision, and recall of the local models and the global model in all three scenarios are above 0.95. Further, the F1-Score of the local models as well as the global model are $\geq$0.95 in all three scenarios. 
\par
Each of the clients has its own local dataset. Thus, the weights of different local models are different. The server updates the global model after receiving model updates from all the clients. As the model weights of different clients vary, the accuracy of the global model also can differ for different number of clients participating in CFL. However, as we observed from the experiment for all the three scenarios, the accuracy of the global model has been improved than the global model before using CFL. The local models are also updated after receiving the global model update from the server. As the number of clients participating in the CFL varies, the average accuracy may differ for different number of clients. However, we observe that the accuracy of the local models for all three scenarios are above 0.95. Hence, we can recommend the use of FL to achieve a global model with high prediction accuracy and privacy protection without sharing the dataset. The higher accuracy, precision, recall, and F1-Score of global and local models indicate the models provide accurate prediction with stability. 

\begin{figure*}
    \centering
    \begin{minipage}{0.495\linewidth}
    \includegraphics[width=0.99\linewidth, height=2.5in]{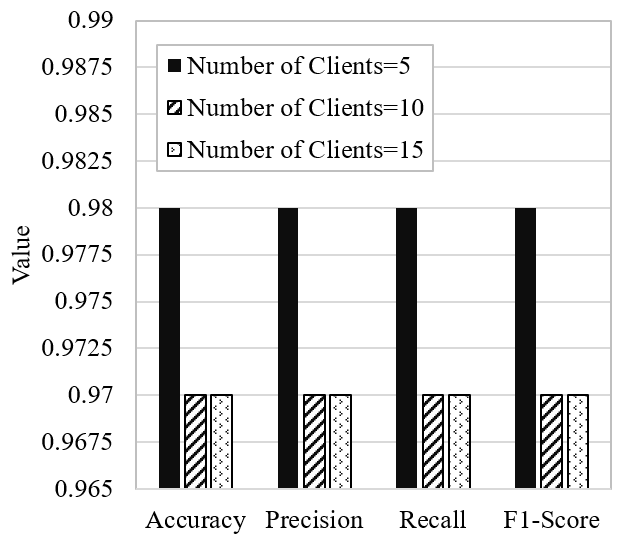}
    \caption{\centering{Accuracy, precision, recall, and F1-Score of the global model after FL}}
    \label{acc_global}
    \end{minipage}
    \begin{minipage}{0.495\linewidth}
    \includegraphics[width=0.99\linewidth, height=2.5in]{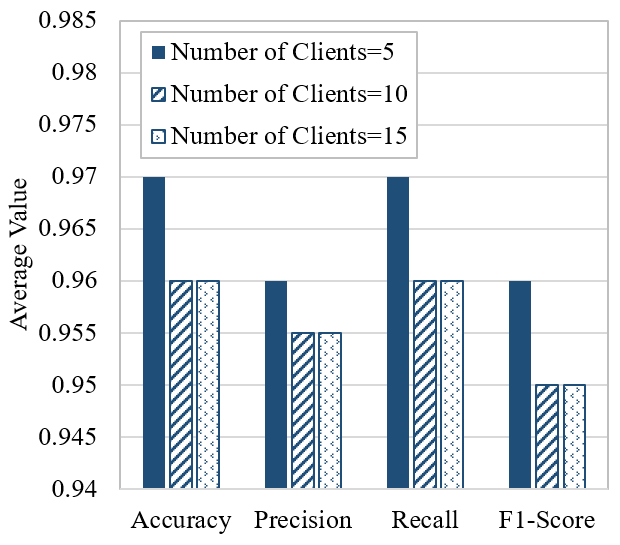}
    \caption{\centering{Average accuracy, precision, recall, and F1-Score of the local models after FL}}
    \label{acc_local}
    \end{minipage}
\end{figure*}
\begin{figure}
    \centering
    \includegraphics[width=0.6\linewidth, height=2.5in]{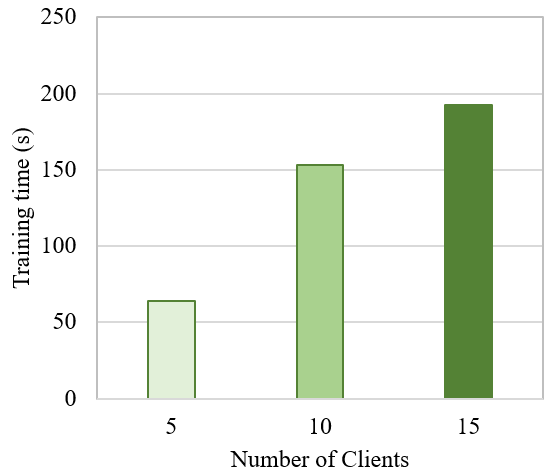}
    \caption{Training time of the global model}
    \label{trtime_gl}
\end{figure}
\begin{figure*}
    \centering
    \begin{minipage}{0.495\linewidth}
    \includegraphics[width=0.99\linewidth, height=2.5in]{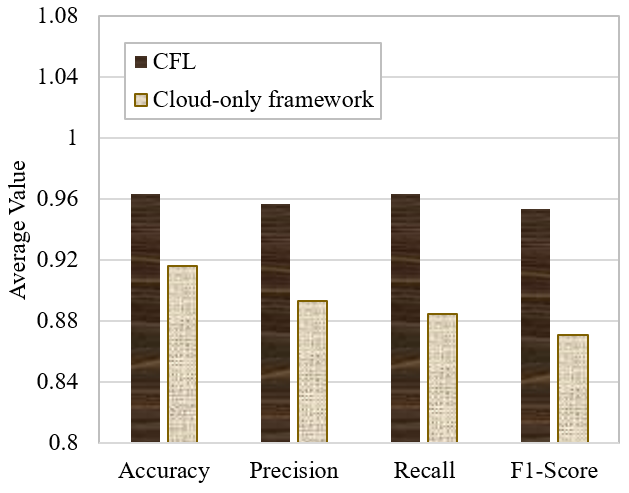}
    \caption{\centering{Comparison of average accuracy, precision, recall, and F1-Score between the CFL-based and cloud-only frameworks}}
    \label{acc_comp}
    \end{minipage}
    \begin{minipage}{0.495\linewidth}
    \includegraphics[width=0.99\linewidth, height=2.5in]{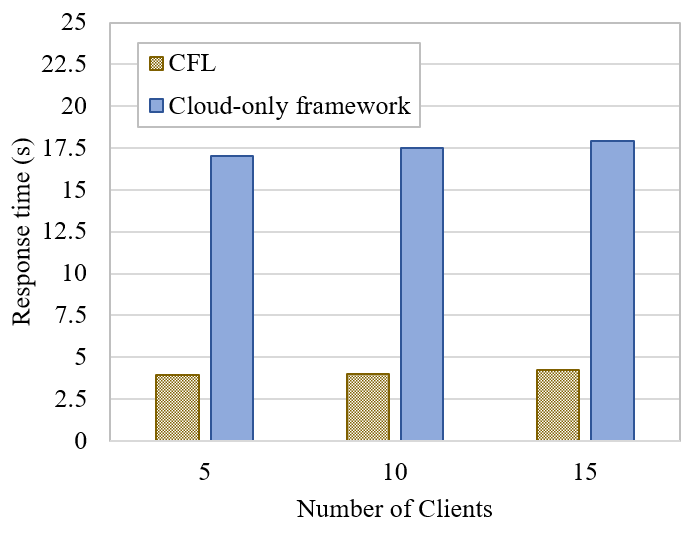}
    \caption{\centering{Comparison of response time between the CFL-based and cloud-only frameworks}}
    \label{resp}
    \end{minipage}
\end{figure*}
The training time for the global model for scenarios 1, 2, and 3 are presented in Fig. \ref{trtime_gl}. The training time is measured in seconds (s). As observed from the results, the training time of the global model for scenarios 1, 2, and 3 are 64.13 s, 153.04 s, and 192.33 s respectively. Here, the training time is measured as the sum of the time consumption in model initialization, local model training, exchanging model updates with participating nodes, and aggregation. As the number of clients increase, the delay in receiving updates from the clients increase though parallel communication takes place. As the aggregation takes place only after receiving updates from all the participating clients, the training time increases with the number of participating clients. Thus, the training time for scenario 1 with five clients is low compared to the other two scenarios. 
\subsubsection{Comparison with Cloud-only framework}
The CFL-based framework is compared with the cloud-only framework with respect to accuracy, precision, recall, F1-Score, and response time. By response time, we refer to the difference between the time stamps of submission of request and receiving response from the device regarding crop yield prediction. In this case, again we conducted the experiment for five, ten, and fifteen clients. Each of the clients sent its dataset to the server for analysis. The time consumption in sending the dataset, performing analysis on the dataset using LSTM, sending result to the client, and receiving result by the client, were measured. We observed that the response time for the cloud-only framework was higher than the CFL-based framework. We also observed that the prediction accuracy for the clients' datasets were also lower than the CFL-based framework. In Fig. \ref{acc_comp}, the average accuracy, precision, recall, and F1-Score of the local models after CFL are compared to the cloud-only framework. Fig. \ref{resp} presents the comparison of the response time of the CFL-based model to the cloud-only framework. The CFL-based framework reduces the response time $\sim$75\% than the cloud-only framework. The average accuracy, precision, recall, and F1-Score are also improved by $\sim$5\%, $\sim$7\%, $\sim$8\%, and $\sim$9\% using CFL than the cloud-only framework. Hence, we observe that the CFL-based framework outperforms the cloud-only framework. 
\begin{figure*} 
    \centering
\begin{minipage}{0.495\linewidth}
\includegraphics[width=0.99\linewidth, height=2.5in]{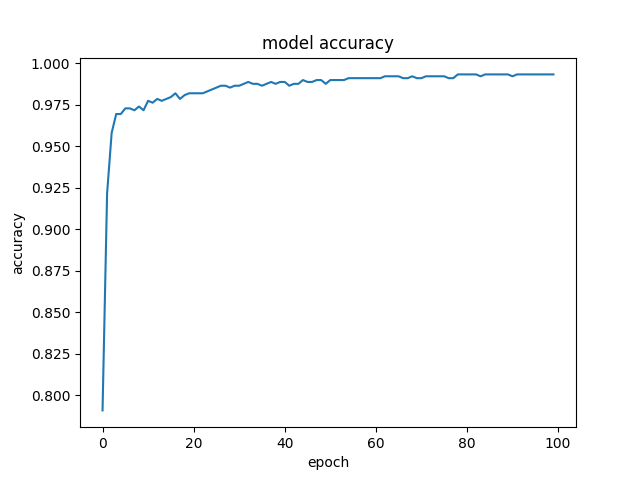}
\caption{\centering{Accuracy of Node1 in ring-based network}}
\label{accg1r}
\end{minipage}
\begin{minipage}{0.495\linewidth}
\includegraphics[width=0.99\linewidth, height=2.5in]{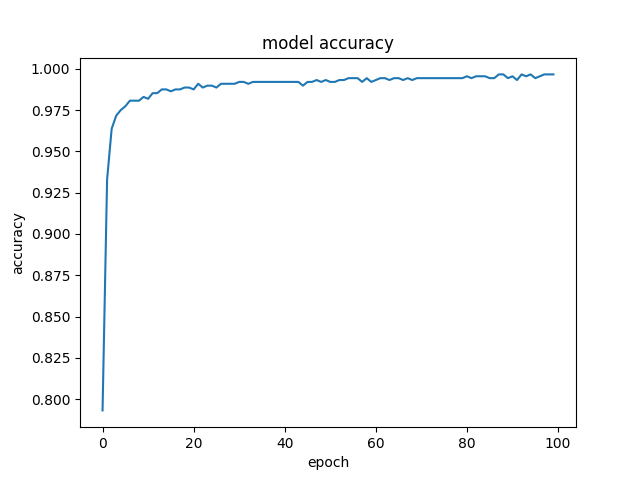}
\caption{\centering{Accuracy of Node2 in ring-based network}}
\label{accg2r}
\end{minipage}
\end{figure*}
\begin{figure*} 
    \centering
\begin{minipage}{0.495\linewidth}
\includegraphics[width=0.99\linewidth, height=2.5in]{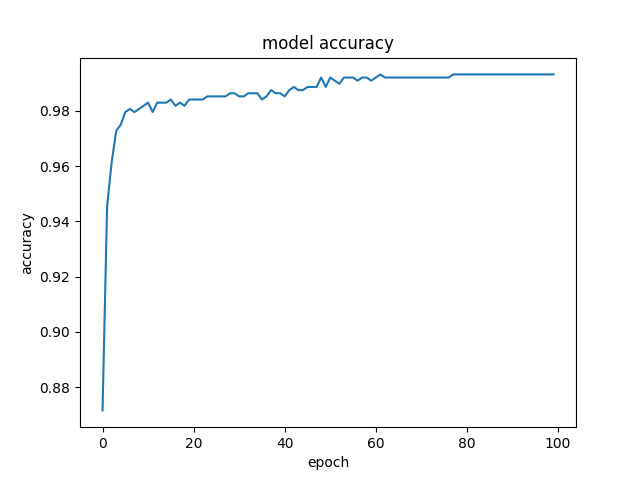}
\caption{\centering{Accuracy of Node3 in ring-based network}}
\label{accg3r}
\end{minipage}
\begin{minipage}{0.495\linewidth}
\includegraphics[width=0.99\linewidth, height=2.5in]{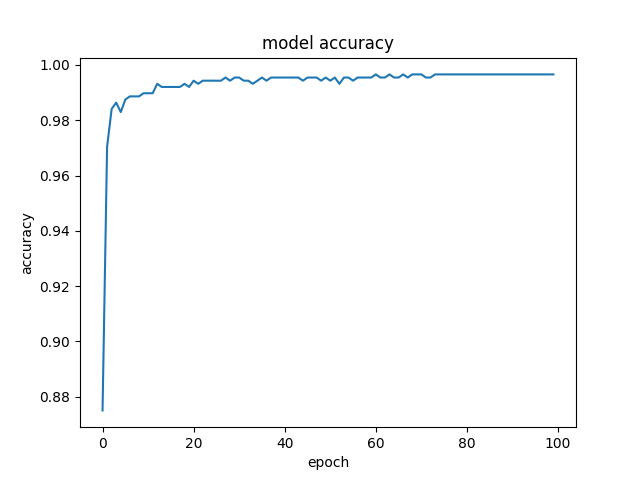}
\caption{\centering{Accuracy of Node4 in ring-based network}}
\label{accg4r}
\end{minipage}
\end{figure*}
\subsection{DFL in crop yield prediction}
The CFL-based framework though has high prediction accuracy, there are several limitations such as increase in overhead on the cloud, security, requirement of high bandwidth, etc. To address these limitations, DFL has been introduced as we have discussed earlier. In DFL, the clients form a collaborative network among themselves. In this work, we used DFL frameworks with ring and mesh topology. Here, also we considered three network scenarios: (i) Four nodes form the network, (ii) Seven nodes form the network, and (iii) Ten nodes form the network. In ring topology, each node exchanges model updates only with the preceding and succeeding nodes. In mesh topology, each node exchanges model updates with rest of the nodes in the formed network.
In \textit{Scenario 1}, we had considered four nodes (H1, H2, H4, and H5), which were connected either using ring topology or mesh topology. H3 worked as the server in our experiment. While using ring topology, the prediction accuracy of all the four nodes after DFL are presented in Figs. \ref{accg1r}, \ref{accg2r}, \ref{accg3r}, and \ref{accg4r}, respectively. The maximum number of rounds we had considered 10, and the number of local epochs were considered 100. 
\begin{figure*}
    \centering
\begin{minipage}{0.495\linewidth}
\includegraphics[width=0.99\linewidth, height=2.5in]{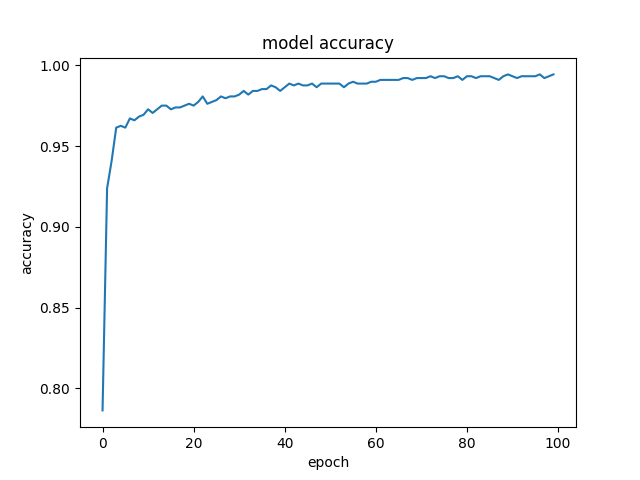}
\caption{\centering{Accuracy of Node1 in mesh-based network}}
\label{accg1m}
\end{minipage}
\begin{minipage}{0.495\linewidth}
\includegraphics[width=0.99\linewidth, height=2.5in]{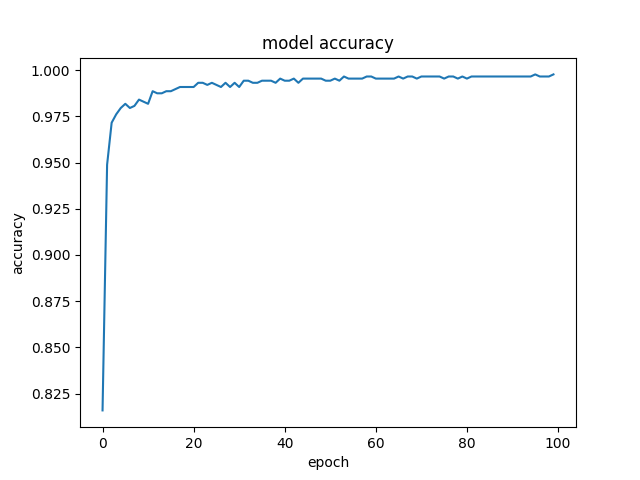}
\caption{\centering{Accuracy of Node2 in mesh-based network}}
\label{accg2m}
\end{minipage}
\end{figure*}
\begin{figure*}
    \centering
\begin{minipage}{0.495\linewidth}
\includegraphics[width=0.99\linewidth, height=2.5in]{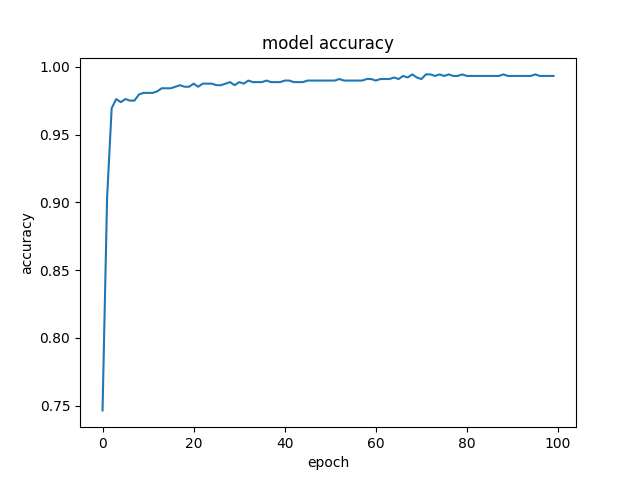}
\caption{\centering{Accuracy of Node3 in mesh-based network}}
\label{accg3m}
\end{minipage}
\begin{minipage}{0.495\linewidth}
\includegraphics[width=0.99\linewidth, height=2.5in]{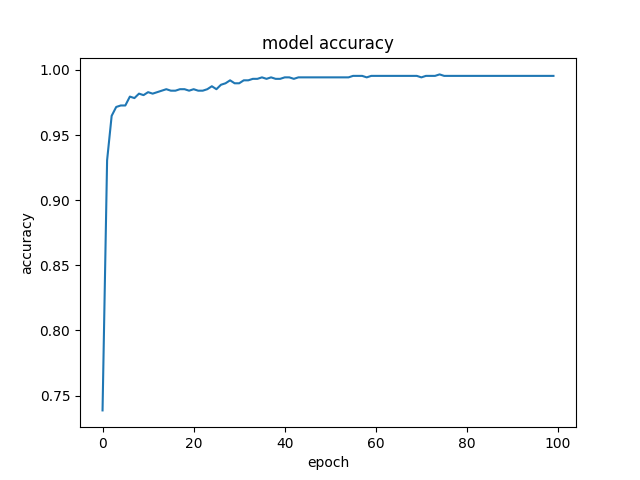}
\caption{\centering{Accuracy of Node4 in mesh-based network}}
\label{accg4m}
\end{minipage}
\end{figure*}
\begin{figure*}
    \centering
    \begin{minipage}{0.495\linewidth}
\includegraphics[width=0.99\linewidth, height=2.4in]{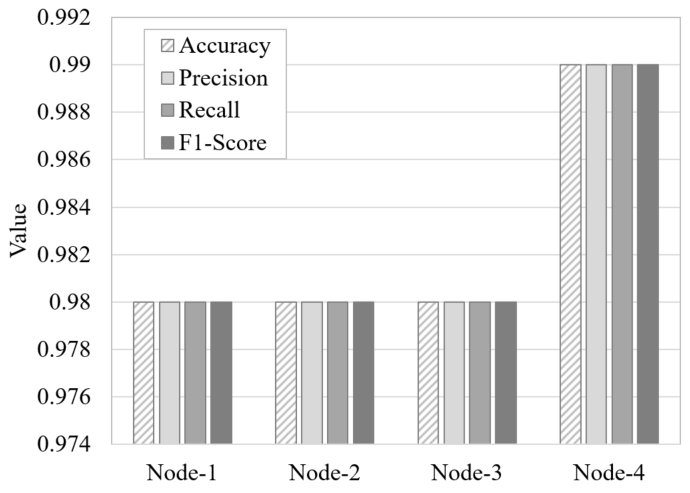}
    \caption{\centering{Accuracy, precision, recall, and F1-Score of the local models in ring-based network}}
    \label{acc_ring}
    \end{minipage}
    \begin{minipage}{0.495\linewidth}
    \includegraphics[width=0.99\linewidth, height=2.4in]{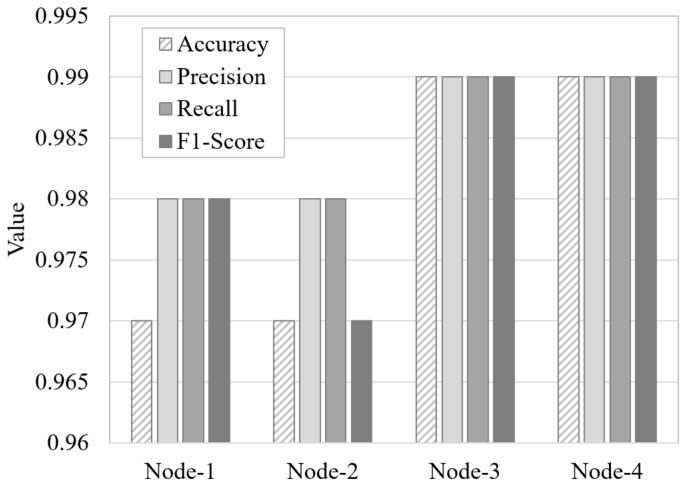}
    \caption{\centering{Accuracy, precision, recall, and F1-Score of the local models in mesh-based network}}
    \label{acc_mesh}
    \end{minipage}
\end{figure*}
In the considered ring-based P2P network, H1 shares its model updates with H2 and H5, H2 shares its model updates with H1 and H4, H4 shares its model updates with H2 and H5, and H5 shares its model updates with H1 and H4. After receiving model updates each of the node performs aggregation and updates their local models accordingly. In the conducted experiment, the prediction accuracy of nodes 1 (H1), 2 (H2), 3 (H4), and 4 (H5) were 0.98, 0.98, 0.98, and 0.99 respectively, while using the ring-based P2P network. As the datasets of the nodes are different, their prediction accuracy may also differ. However, we observed that the prediction accuracy was above 0.97 for all the nodes, and the prediction accuracy of the global model developed by the nodes was 0.98.
\par
The prediction accuracy of all the four nodes while using mesh topology are presented in Figs. \ref{accg1m}, \ref{accg2m}, \ref{accg3m}, and \ref{accg4m}, respectively. 
In the mesh-based network, each node shares its model updates with rest of the nodes. From the results we observe that the prediction accuracy of nodes 1, 2, 3, and 4, after FL were 0.97, 0.97, 0.99, and 0.99 respectively. Here, also the nodes' datasets are different and accuracy level may also therefore vary. However, the accuracy obtained for all the nodes was above 0.96, and the prediction accuracy of the global model developed by the nodes was 0.98. 
\par
The accuracy, precision, recall, and F1-Score for the four nodes in ring topology and in mesh topology are presented in Figs. \ref{acc_ring} and \ref{acc_mesh} respectively. 
We observe from the results that the accuracy, precision, recall, and F1-Score for all the nodes are above 0.96 for both the ring and mesh topology-based networks. Hence, we observe that using DFL high prediction accuracy can be obtained. The training time of the four nodes in ring and mesh-based network are presented in Fig. \ref{trtime_locm}. As we observed in ring-based network the training time of nodes 1, 2, 3 and 4 are 35.53 s, 27.86 s, 31.84 s, and 30.62 s respectively. We also observed that the training time of the nodes 1, 2, 3, and 4 are 36.99 s, 39.65 s, 36.8 s, and 32.97 s, while using the mesh-based network. 
\begin{figure}
\centering
    \includegraphics[width=0.9\linewidth, height=2.5in]{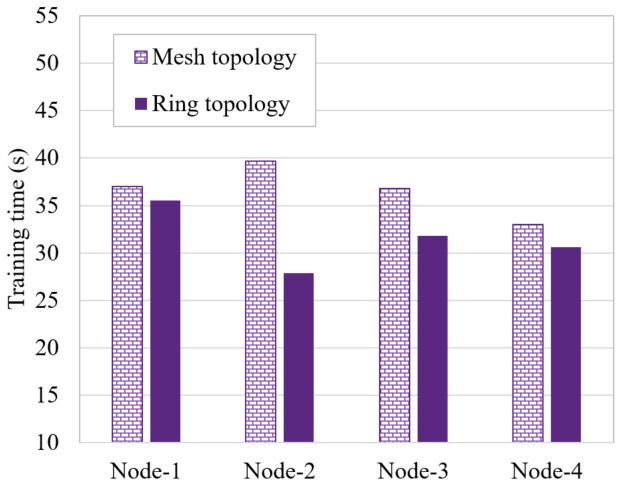}
    \caption{\centering{Training time of the local models in ring and mesh-based networks}}
    \label{trtime_locm}
\end{figure}
\begin{figure*} 
    \centering
\begin{minipage}{0.49\linewidth}
\includegraphics[width=0.99\linewidth, height=2.4in]{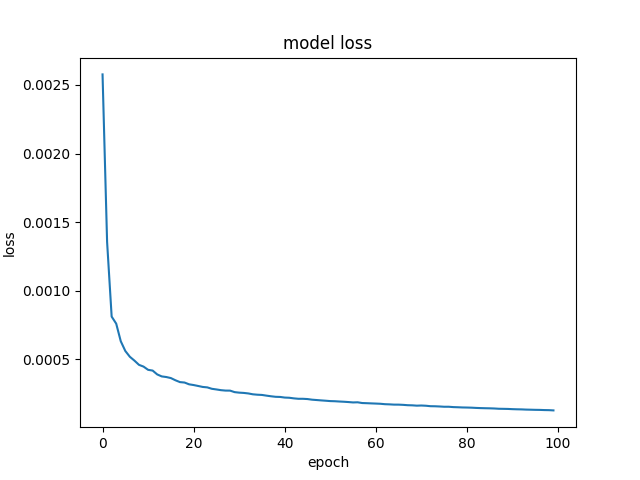}
\caption{\centering{Global loss after using ring-based DFL framework}}
\label{loss_dflring}
\end{minipage}
\begin{minipage}{0.495\linewidth}
\includegraphics[width=0.99\linewidth, height=2.4in]{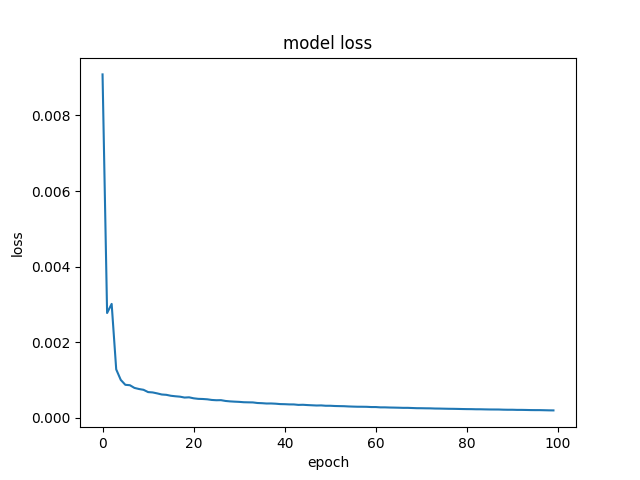}
\caption{\centering{Global loss after using mesh-based DFL framework}}
\label{loss_dflmesh}
\end{minipage}
\end{figure*}
\par
The global loss while using DFL using ring and mesh topology, are presented in Figs. \ref{loss_dflring} and \ref{loss_dflmesh}, respectively. As we observe from the figures, the global loss is $<$0.0005 for ring topology and $<$0.0002 for mesh topology, after 100 epochs. The experimental results after three rounds are presented in the figures. We observed that after 10 rounds the loss is below 0.00002 for both the ring-based and mesh-based DFL frameworks, which is almost negligible. In Section \ref{convergence}, theoretically we have already proved that the global loss tends to 0 as the number of rounds increases. Now, the experimental results also demonstrate the same. 
\par     
We conducted the experiment for seven and ten nodes also. As we observed from the results, the average accuracy, precision, recall, and F1-Score were above 0.98 while using ring topology. We also observed that while using mesh topology the  average accuracy, precision, recall, and F1-Score were above 0.97. However, from the results we observed that for some of the nodes ring topology provided higher accuracy and for other nodes mesh topology provided higher accuracy. However, in both the cases the average accuracy, precision, and recall, and F1-Score were above 0.97. Hence, we observed from the results that for both the ring and mesh topology, high prediction accuracy was obtained by the DFL in all three scenarios. The average accuracy, precision, recall, and F1-Score for the ring-based and mesh-based DFL frameworks are presented in Figs. \ref{acc_ring_avg} and \ref{acc_mesh_avg}, respectively. \textit{The prediction accuracy of the global model developed by the nodes for the three scenarios for mesh topology and ring topology were $\geq$0.98.} The average training time for the local models while using ring and mesh topology are presented in Fig. \ref{trtime_locr}. 
\begin{figure*} 
    \centering
    \begin{minipage}{0.495\linewidth}
    \includegraphics[width=0.99\linewidth, height=2.7in]{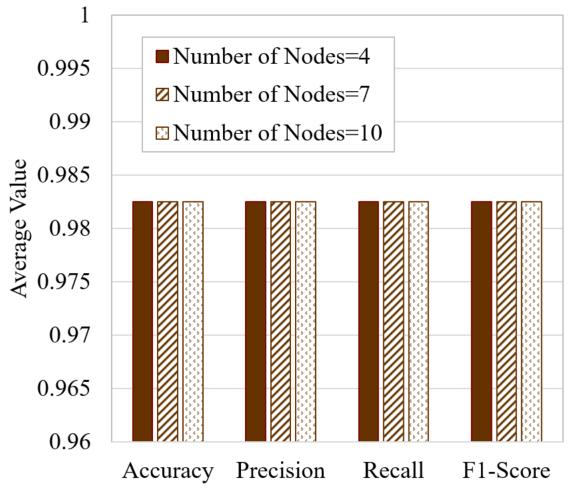}
    \caption{\centering{Average accuracy, precision, recall, and F1-Score of the local models in ring-based network}}
    \label{acc_ring_avg}
    \end{minipage}
    \begin{minipage}{0.495\linewidth}
    \includegraphics[width=0.99\linewidth, height=2.7in]{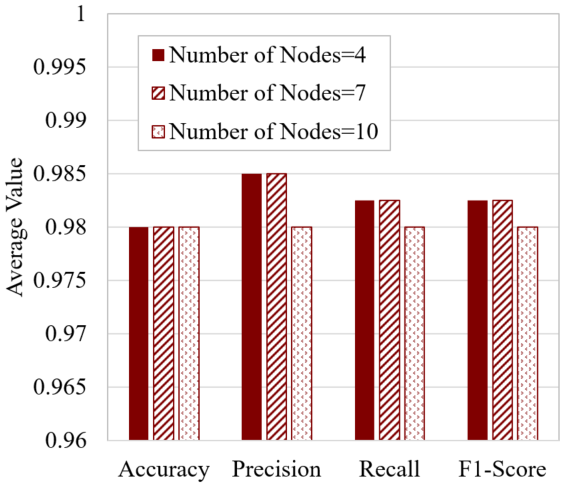}
    \caption{\centering{Average accuracy, precision, recall, and F1-Score of the local models in mesh-based network}}
    \label{acc_mesh_avg}
    \end{minipage}
\end{figure*}

\begin{figure*}
    \centering
    \includegraphics[width=0.5\linewidth, height=2.5in]{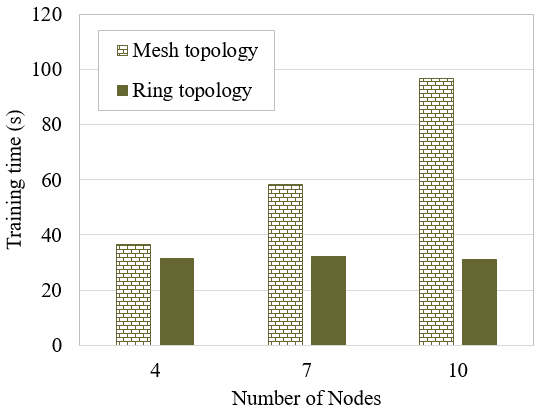}
    \caption{\centering{Average training time of the local models in ring and mesh-based networks}}
    \label{trtime_locr}
\end{figure*}
As observed from the results, the average training time of the local models for scenarios 1, 2, and 3 are 31.46 s, 32.26 s, and 30.98 s, respectively while using ring topology. We also observed that while using mesh topology, the average training time of the local models for scenarios 1, 2, and 3 are 36.6 s, 58.4 s, and 96.71 s, respectively. We observed that for both ring and mesh topology with four, seven, and ten nodes, it took two rounds to get the desired accuracy level of 0.95, and the difference between the accuracy levels attained by two consecutive rounds was $<0.001$ after three rounds for each of the nodes. In ring topology, each node exchanges model updates with the adjacent nodes, whereas in mesh topology, each node exchanges model updates with rest of the nodes in the formed network. As we observed in our experiment, the training time for mesh topology was higher. We also measured the response time for both the mesh-based and ring-based networks, and we observed that for both ring and mesh topology, the response time was in the range of 1.2-3.5s, in all the three cases.
\par
Finally we observe that (i) for CFL we achieved $\geq$97\% accuracy for the global model, and (ii) for DFL using ring and mesh topology we achieved $\geq$98\% and $\geq$97\% prediction accuracy respectively for the local models, and $\geq$98\% accuracy for the global model. As we observe from the experimental results, using CFL and DFL, high prediction accuracy can be obtained without sharing the data. 

\subsection{Comparison with existing crop yield prediction approaches}
In this section, we compare the performance of the CFL and DFL-based frameworks in crop yield prediction with the state-of-the-art models for crop yield prediction. The comparative analysis is presented in Table \ref{tab:comp1}. At first, we compare the CFL and DFL-based frameworks with the existing crop yield prediction frameworks that used the $same\;dataset^1$ that we used for performance analysis. After that we draw a comparison of the CFL and DFL-based frameworks with an existing FL-based framework that used another dataset for performance evaluation.
\begin{sidewaystable}
\caption{\textcolor{black}{Comparison of performance of proposed and existing crop yield prediction frameworks}}
\small
    \centering
    \begin{tabular}{|c|c|c|c|c|c|}
        \hline
        \textbf{Work} &	 \textbf{Classifier} &  \textbf{FL} &	 \textbf{Accuracy} &  \textbf{Training} & \textbf{Response}\\
        & & \textbf{is used}& &  \textbf{time}&  \textbf{time}\\
        \hline
        \cite{thilakarathne2022cloud} & RF (Highest), & No  & 97.18\% & Not & Not \\ 
&  DT, KNN,	 & & &measured& measured \\
&  XGBoost, SVM	 & & &&\\
        \hline
       \cite{bakthavatchalam2022iot} & MLP (Highest), & No & 98.23\%	 & Model build time:& Not  \\
 & Decision Table, JRip & & &10.56 s (MLP)& measured\\
        \hline
       \cite{cruziot} & KNN & No & 92.62\% (Recall)	 & Not & Not \\
   &  & & &measured& measured\\
  \hline
    \cite{kathiria2023smart} & DT, SVM, & No & 99.24\%	 & Not & Not \\
   & KNN, LGBM,& & &measured& measured\\
  & RF (Highest) & & &&\\
   \hline
       \cite{gopi2024red} & LSTM, Bi-LSTM, GRU & No & 98.45\%	 & Not & Not \\
    & & & & measured& measured\\
        \hline 
        \cite{idoje2023federated}& Gaussian NB & Yes & 90\%	 & Not& Not  \\
  & & (CFL)& & measured& measured\\
        \hline 
         FL-based   & LSTM & Yes & CFL: $\geq$97\% (global) & CFL: 60-200 s & CFL: 2.5-5 s\\
 framework &  & (CFL & (5-15 clients), & (5-15 clients), &  (5-15 clients),\\
  &  & and & DFL: $\geq$98\% (local) &DFL: 25-40 s & DFL: 1.2-3.5 s \\
   &  & DFL) & (Ring) & (Ring)& (4-10 nodes)\\
   &  & & (4-10 nodes),& (4-10 nodes),& \\
  &  & & $\geq$97\% (local) &30-100 s &\\
  &  & & (Mesh) &(Mesh)&\\
 &  & & (4-10 nodes),& (4-10 nodes)&\\
 &  & & $\geq$98\% (global)& &\\
 &  & & (Mesh and ring)& &\\
        \hline 
    \end{tabular}
    \label{tab:comp1}
\end{sidewaystable}

\par
In (\cite{thilakarathne2022cloud}), the authors used RF, DT, KNN, XGBoost, and SVM, which are well-known ML models, and among them RF achieved the highest accuracy of 97.18\%. In (\cite{bakthavatchalam2022iot}), MLP was used and 98.23\% accuracy was achieved in crop yield prediction. KNN was adpoted in (\cite{cruziot}) for data analysis and an accuracy of 92.62\% was achieved for crop yield prediction. In (\cite{kathiria2023smart}), the authors used DT, SVM, KNN, LGBM, and RF, in crop yield prediction, and among them RF achieved the highest accuracy of 99.24\%. The authors in (\cite{gopi2024red}) used LSTM, Bi-LSTM, and GRU-based framework for data analysis, and achieved an accuracy of 98.45\% in crop yield prediction. As we observe from Table \ref{tab:comp1}, none of the existing approaches used FL. Most of the existing approaches focused on the use of ML/DL approaches for crop yield prediction without addressing the concern of using cloud-only paradigm for data analysis, such as network connectivity issue, data privacy, response time, etc. To achieve data privacy protection by without sharing data but obtain a model with high prediction accuracy through collaborative training has been addressed in our work. We have explored the use of FL in crop yield prediction through an experimental analysis using multiple clients. As we observe CFL and DFL have achieved $\geq$97\% prediction accuracy but without sharing actual datasets. Hence, we observe that using FL high prediction accuracy can be obtained like the state-of-the-art models but with enhanced data privacy. In (\cite{idoje2023federated}), CFL was used for crop yield prediction based on Gaussian NB, and 90\% prediction accuracy was achieved with Adam optimizer and learning rate 0.001. As we observe, only CFL was used in (\cite{idoje2023federated}), whereas we have used both CFL and DFL in crop yield prediction based on LSTM. Further, we have achieved higher accuracy ($\geq$97\%) than (\cite{idoje2023federated}) (Optimizer: Adam, learning rate: 0.001). Further, the training time for both the CFL and DFL approaches have been determined in our work, and we observe that the training time is medium for the considered scenarios. We also observe that the response time is low for both the CFL and DFL-based strategies. Thus, crop yield prediction with high accuracy but low response time can be achieved using FL-based frameworks. 

\section{Future Research Directions}
\label{future}
In this work, we have concentrated on the use of CFL and DFL in crop yield prediction. However, there still remains several challenges stated as follows. 
\begin{itemize}
    \item Data heterogeneity: Data heterogeneity is a critical issue of FL. As the data is distributed among several clients, it may lead to non-independent and identically distributed and unbalanced datasets. In such a scenario, model training is a challenge and it becomes critical to build a global model with consistent performance across all the clients. 
    \item Use of FTL: The datasets of different clients may have different sample space as well as different feature space. In that case, FTL can be used. In FTL, features from different feature spaces are transferred to the same presentation. Further, for enhancing data privacy and security, gradient updates are encrypted. The use of FTL with gradient encryption in crop yield prediction is a significant research direction.
    \item Resource limitation of user device: The FL encourages local data analysis and collaborative learning. However, the user device may not have sufficient resources for executing an ML/DL algorithm, and the user has to use the cloud server for data analysis. Another difficulty may arise when a device cannot execute its local model due to resource limitation or any other issue. In that case, the model of that device along with the dataset needs to be transferred to a nearby node. In both the scenarios, cryptography or steganography can be used for protecting the data privacy by either encrypting or hiding it inside a media during transmission. 
    \item Enhance security of model parameters and the system: Though, no data is shared and the model updates are exchanged in FL, still there is a possibility of leakage of gradient information. In such a scenario, gradient encryption can be used. Further, blockchain can be integrated with FL for enhancing the security of the entire system.
    \item Communication overhead: In FL, the exchange of model updates during training enhances the communication overhead and latency. Therefore, a trade-off should be maintained between the number of rounds of training the model and communication overhead, so that prediction accuracy can be good but the latency will not be very high.
\end{itemize}

\section{Conclusions}
\label{con}
Crop yield prediction is a crucial area of smart agriculture. In this paper, we have explored the use of CFL and DFL in crop yield prediction based on LSTM. An experimental case study has been conducted, where different number of devices perform collaborative training using CFL and DFL. To implement CFL, a client-server paradigm is developed using MLSocket, and multiple clients are handled by the server. To implement the DFL, a collaborative network is formed using ring topology and mesh topology. In the ring-based P2P network, each node exchanges model updates with the neighbour nodes and performs aggregation to build the upgraded model. In the mesh-based network, each node exchanges model updates with rest of the nodes and performs aggregation to build the upgraded model. The performance of the CFL and DFL-based frameworks are evaluated in terms of prediction accuracy, precision, recall, F1-Score, and training time. The experimental results present that $\geq$97\% prediction accuracy has been achieved using the CFL and DFL-based frameworks. The results also show that the CFL-based framework reduces the response time $\sim$75\% than the cloud-only framework. The average accuracy, precision, recall, and F1-Score are also improved by $\sim$5\%, $\sim$7\%, $\sim$8\%, and $\sim$9\% using CFL than the cloud-only framework. Finally, the future research directions in crop yield prediction are highlighted in this paper.

\section*{Acknowledgements}
This work is partially supported by An ARC Discovery Project (DP240102088).

\bibliographystyle{abbrvnat}
\bibliography{main.bib}

\begin{thebibliography}{25}
\providecommand{\natexlab}[1]{#1}
\providecommand{\url}[1]{\texttt{#1}}
\expandafter\ifx\csname urlstyle\endcsname\relax
  \providecommand{\doi}[1]{doi: #1}\else
  \providecommand{\doi}{doi: \begingroup \urlstyle{rm}\Url}\fi

\bibitem[Atitallah et~al.(2023)Atitallah, Driss, and Ghezala]{atitallah2023fedmicro}
S.~B. Atitallah, M.~Driss, and H.~B. Ghezala.
\newblock Fedmicro-ida: A federated learning and microservices-based framework for iot data analytics.
\newblock \emph{Internet of Things}, 23:\penalty0 100845, 2023.

\bibitem[Bakthavatchalam et~al.(2022)Bakthavatchalam, Karthik, Thiruvengadam, Muthal, Jose, Kotecha, and Varadarajan]{bakthavatchalam2022iot}
K.~Bakthavatchalam, B.~Karthik, V.~Thiruvengadam, S.~Muthal, D.~Jose, K.~Kotecha, and V.~Varadarajan.
\newblock Iot framework for measurement and precision agriculture: predicting the crop using machine learning algorithms.
\newblock \emph{Technologies}, 10\penalty0 (1):\penalty0 13, 2022.

\bibitem[Bera et~al.(2023{\natexlab{a}})Bera, Dey, Ghosh, and Mukherjee]{bera2023internet}
S.~Bera, T.~Dey, S.~Ghosh, and A.~Mukherjee.
\newblock Internet of things and dew computing-based system for smart agriculture.
\newblock In \emph{Dew Computing: The Sustainable IoT Perspectives}, pages 289--316. Springer, 2023{\natexlab{a}}.

\bibitem[Bera et~al.(2023{\natexlab{b}})Bera, Dey, Mukherjee, and Buyya]{bera2023cropreco}
S.~Bera, T.~Dey, A.~Mukherjee, and R.~Buyya.
\newblock E-cropreco: a dew-edge-based multi-parametric crop recommendation framework for internet of agricultural things.
\newblock \emph{The Journal of Supercomputing}, 79\penalty0 (11):\penalty0 11965--11999, 2023{\natexlab{b}}.

\bibitem[Bera et~al.(2024)Bera, Dey, Mukherjee, and De]{bera2024flag}
S.~Bera, T.~Dey, A.~Mukherjee, and D.~De.
\newblock Flag: Federated learning for sustainable irrigation in agriculture 5.0.
\newblock \emph{IEEE Transactions on Consumer Electronics}, 70\penalty0 (1):\penalty0 2303--2310, 2024.

\bibitem[Boursianis et~al.(2022)Boursianis, Papadopoulou, Diamantoulakis, Liopa-Tsakalidi, Pantelis, Salahas, Karagiannidis, Wan, and K.]{boursianis2020internet}
A.~D. Boursianis, M.~S. Papadopoulou, P.~Diamantoulakis, A.~Liopa-Tsakalidi, B.~Pantelis, G.~Salahas, G.~Karagiannidis, S.~Wan, and G.~S. K.
\newblock Internet of things (iot) and agricultural unmanned aerial vehicles (uavs) in smart farming: a comprehensive review. internet of things.
\newblock \emph{Internet of Things}, 18:\penalty0 100187, 2022.

\bibitem[Cruz et~al.(2022)Cruz, Mafra, and Teixeira]{cruziot}
M.~Cruz, S.~Mafra, and E.~Teixeira.
\newblock An iot crop recommendation system with k-nn and lora for precision farming.
\newblock 2022.

\bibitem[Debauche et~al.(2021)Debauche, Trani, Mahmoudi, Manneback, Bindelle, Mahmoudi, Guttadauria, and Lebeau]{debauche2021data}
O.~Debauche, J.-P. Trani, S.~Mahmoudi, P.~Manneback, J.~Bindelle, S.~A. Mahmoudi, A.~Guttadauria, and F.~Lebeau.
\newblock Data management and internet of things: A methodological review in smart farming.
\newblock \emph{Internet of Things}, 14:\penalty0 100378, 2021.

\bibitem[Dey et~al.(2024)Dey, Bera, Paul, De, Mukherjee, and Buyya]{dey2024fly}
T.~Dey, S.~Bera, B.~Paul, D.~De, A.~Mukherjee, and R.~Buyya.
\newblock Fly: Femtolet-based edge-cloud framework for crop yield prediction using bidirectional long short-term memory.
\newblock \emph{Software: Practice and Experience}, 54\penalty0 (8):\penalty0 1361--1377, 2024.

\bibitem[Djenouri et~al.(2023)Djenouri, Michalak, and Lin]{djenouri2023federated}
Y.~Djenouri, T.~P. Michalak, and J.~C.-W. Lin.
\newblock Federated deep learning for smart city edge-based applications.
\newblock \emph{Future Generation Computer Systems}, 147:\penalty0 350--359, 2023.

\bibitem[Durrant et~al.(2022)Durrant, Markovic, Matthews, May, Enright, and Leontidis]{durrant2022role}
A.~Durrant, M.~Markovic, D.~Matthews, D.~May, J.~Enright, and G.~Leontidis.
\newblock The role of cross-silo federated learning in facilitating data sharing in the agri-food sector.
\newblock \emph{Computers and Electronics in Agriculture}, 193:\penalty0 106648, 2022.

\bibitem[Friha et~al.(2022)Friha, Ferrag, Shu, Maglaras, Choo, and Nafaa]{friha2022felids}
O.~Friha, M.~A. Ferrag, L.~Shu, L.~Maglaras, K.-K.~R. Choo, and M.~Nafaa.
\newblock Felids: Federated learning-based intrusion detection system for agricultural internet of things.
\newblock \emph{Journal of Parallel and Distributed Computing}, 165:\penalty0 17--31, 2022.

\bibitem[Gopal and Bhargavi(2019)]{gopal2019novel}
P.~M. Gopal and R.~Bhargavi.
\newblock A novel approach for efficient crop yield prediction.
\newblock \emph{Computers and Electronics in Agriculture}, 165:\penalty0 104968, 2019.

\bibitem[Gopi and Karthikeyan(2024)]{gopi2024red}
P.~Gopi and M.~Karthikeyan.
\newblock Red fox optimization with ensemble recurrent neural network for crop recommendation and yield prediction model.
\newblock \emph{Multimedia Tools and Applications}, 83\penalty0 (5):\penalty0 13159--13179, 2024.

\bibitem[Idoje et~al.(2023)Idoje, Dagiuklas, and Iqbal]{idoje2023federated}
G.~Idoje, T.~Dagiuklas, and M.~Iqbal.
\newblock Federated learning: Crop classification in a smart farm decentralised network.
\newblock \emph{Smart Agricultural Technology}, 5:\penalty0 100277, 2023.

\bibitem[Kathiria et~al.(2023)Kathiria, Patel, Madhwani, and Mansuri]{kathiria2023smart}
P.~Kathiria, U.~Patel, S.~Madhwani, and C.~Mansuri.
\newblock Smart crop recommendation system: A machine learning approach for precision agriculture.
\newblock In \emph{Machine Intelligence Techniques for Data Analysis and Signal Processing: Proceedings of the 4th International Conference MISP 2022, Volume 1}, pages 841--850. Springer, 2023.

\bibitem[Li et~al.(2024)Li, Markovic, Edwards, and Leontidis]{li2024model}
A.~Li, M.~Markovic, P.~Edwards, and G.~Leontidis.
\newblock Model pruning enables localized and efficient federated learning for yield forecasting and data sharing.
\newblock \emph{Expert Systems with Applications}, 242:\penalty0 122847, 2024.

\bibitem[Li et~al.(2020)Li, Fan, Tse, and Lin]{li2020review}
L.~Li, Y.~Fan, M.~Tse, and K.-Y. Lin.
\newblock A review of applications in federated learning.
\newblock \emph{Computers \& Industrial Engineering}, 149:\penalty0 106854, 2020.

\bibitem[Manoj et~al.(2022)Manoj, Makkithaya, and Narendra]{manoj2022federated}
T.~Manoj, K.~Makkithaya, and V.~Narendra.
\newblock A federated learning-based crop yield prediction for agricultural production risk management.
\newblock In \emph{Proceedings of the 2022 IEEE Delhi Section Conference (DELCON)}, pages 1--7. IEEE, 2022.

\bibitem[Mothukuri et~al.(2021)Mothukuri, Parizi, Pouriyeh, Huang, Dehghantanha, and Srivastava]{mothukuri2021survey}
V.~Mothukuri, R.~M. Parizi, S.~Pouriyeh, Y.~Huang, A.~Dehghantanha, and G.~Srivastava.
\newblock A survey on security and privacy of federated learning.
\newblock \emph{Future Generation Computer Systems}, 115:\penalty0 619--640, 2021.

\bibitem[Nguyen et~al.(2021)Nguyen, Ding, Pathirana, Seneviratne, Li, and Poor]{nguyen2021federated}
D.~C. Nguyen, M.~Ding, P.~N. Pathirana, A.~Seneviratne, J.~Li, and H.~V. Poor.
\newblock Federated learning for internet of things: A comprehensive survey.
\newblock \emph{IEEE Communications Surveys \& Tutorials}, 23\penalty0 (3):\penalty0 1622--1658, 2021.

\bibitem[Thilakarathne et~al.(2022)Thilakarathne, Bakar, Abas, and Yassin]{thilakarathne2022cloud}
N.~N. Thilakarathne, M.~S.~A. Bakar, P.~E. Abas, and H.~Yassin.
\newblock A cloud enabled crop recommendation platform for machine learning-driven precision farming.
\newblock \emph{Sensors}, 22\penalty0 (16):\penalty0 6299, 2022.

\bibitem[Van~Klompenburg et~al.(2020)Van~Klompenburg, Kassahun, and Catal]{van2020crop}
T.~Van~Klompenburg, A.~Kassahun, and C.~Catal.
\newblock Crop yield prediction using machine learning: A systematic literature review.
\newblock \emph{Computers and electronics in agriculture}, 177:\penalty0 105709, 2020.

\bibitem[Zhang et~al.(2021)Zhang, Xie, Bai, Yu, Li, and Gao]{zhang2021survey}
C.~Zhang, Y.~Xie, H.~Bai, B.~Yu, W.~Li, and Y.~Gao.
\newblock A survey on federated learning.
\newblock \emph{Knowledge-Based Systems}, 216:\penalty0 106775, 2021.

\bibitem[Zhu et~al.(2024)Zhu, Goudarzi, and Buyya]{zhu2024flight}
W.~Zhu, M.~Goudarzi, and R.~Buyya.
\newblock Flight: A lightweight federated learning framework in edge and fog computing.
\newblock \emph{Software: Practice and Experience}, 54\penalty0 (5):\penalty0 813--841, 2024.

\end{thebibliography}

\end{document}